\theoremstyle{definition}
\newtheorem{definition}{Definition}
\newtheorem{assumption}{Assumption}
\newtheorem{prop}{Proposition}
\newtheorem{theorem}{Theorem}
\newtheorem{corollary}{Corollary}
\newcommand{\kop}{\mathfrak{K}}
\def\BState{\State\hskip-\ALG@thistlm}
\begin{document}

\author{
Ian Abraham and Todd D. Murphey%
\thanks{Authors are with the Neuroscience and Robotics lab (NxR)
at the Department of Mechanical Engineering, Northwestern University, 2145 Sheridan Road Evanston, IL, 60208.

Videos of the experiments and sample code can be found at \url{https://sites.google.com/view/active-learning-koopman-op}  .
}%
\thanks{email: i-abr@u.northwestern.edu, t-murphey@northwestern.edu}
%\thanks{Manuscript received April 19, 2005; revised August 26, 2015.}
}

\markboth{Transactions on Robotics, In Press}%
{Abraham \MakeLowercase{\textit{et al.}}: Active Learning of Dynamics for Data-Driven Control using Koopman Operators}

\title{Active Learning of Dynamics for Data-Driven Control Using Koopman Operators}

\maketitle

\begin{abstract}
    This paper presents an active learning strategy for robotic systems that takes into account task information,
    enables fast learning, and allows control to be readily synthesized by taking advantage
    of the Koopman operator representation.
    We first motivate the use of representing nonlinear systems as linear Koopman operator systems by
    illustrating the improved model-based control performance with an actuated Van der Pol system.
    Information-theoretic methods are then applied to the Koopman operator formulation of dynamical systems
    where we derive a controller for active learning of robot dynamics.
    The active learning controller is shown to increase the rate of information about the Koopman operator.
    In addition, our active learning controller can readily incorporate policies built on the Koopman dynamics,
    enabling the benefits of fast active learning and improved control.
    Results using a quadcopter illustrate single-execution active learning and stabilization
    capabilities during free-fall.
    The results for active learning are extended for automating Koopman observables and
    we implement our method on real robotic systems.
\end{abstract}

\begin{IEEEkeywords}
    Active Learning, Information Theoretic Control, Koopman Operators.
\end{IEEEkeywords}

\IEEEpeerreviewmaketitle

\section{Introduction}

    \IEEEPARstart{I}{n} order to enable active learning for robots, we need a control algorithm that readily incorporates
    task information,
    learns dynamic model representation, and is capable of incorporating policies
    for solving additional tasks during the learning process.
    In this work, we develop an active learning controller that enables a robot to learn an expressive representation of its
    dynamics using Koopman operators
    ~\cite{koopman1931hamiltonian, mezic2013analysis, mezic2015applications, budivsic2012applied}.
    Koopman operators represent a nonlinear dynamical system as a linear, infinite dimensional, system by evolving functions
    of the state (also known as function observables) in time
    ~\cite{koopman1931hamiltonian, mezic2013analysis, mezic2015applications, budivsic2012applied}.
    Often, these linear representations can capture the behavior of the dynamics globally while enabling the use of known linear
    quadratic control methods.
    As a result, the Koopman operator representation changes how we represent the dynamic constraints of the robotic systems,
    carrying more nonlinear dynamic information, and often improving control authority.

    Koopman operator dynamics are typically found through data-driven methods that generate an approximation to the theoretical
    infinite-dimensional Koopman operator~\cite{mezic2013analysis,budivsic2012applied,korda2016linear}.
    These data-driven methods require robotic systems to be actuated in order to collect data.
    The process for data collection in robotics is an active process that relies on control;
    therefore, learning the Koopman operator formulation,
    for robotics, is an active learning process.

    In this paper, we use the Koopman operator representation for improving control authority of nonlinear robotic systems.
    Moreover, we address the problem of calculating the linear representation of the Koopman operator by exploiting an
    information-theoretic active learning strategy based on the structure of Koopman operators.
    As a result, are able to demonstrate active learning through data-driven control in real-time settings where only a
    single execution of the robotic system is possible.
    Thus, the contribution of this paper is a method for active learning of Koopman operator representations of nonlinear
    dynamical systems which exploits both information-theoretic measures and improved control authority based on
    Koopman operators.

    \subsection{History and Related Work}

        Active learning in robotics has recently been a topic of
        interest~\cite{roy2001toward, baranes2013active, dima2004enabling, kober2012reinforcement, williams2017information}.
        Much work has been done in active learning for parameter identification
        \cite{armstrong1989finding, wilson2017dynamic, 6882246, yoshidaTRO2017}
        as well as active learning for state-control mappings in reinforcement learning
        \cite{kober2012reinforcement,mnih2016asynchronous,duan2016benchmarking, 7106543, Yu-RSS-17}
        and adaptive control~\cite{sin1982stochastic, ding2016recursive, ding2018least}.
        In particular, much of the mentioned work refers to exciting a robot's dynamics \textemdash using
        information theoretic measures~\cite{wilson2017dynamic, 6882246, williams2017information},
        reward functions~\cite{kober2012reinforcement,mnih2016asynchronous, 7106543, williams2017information} in
        reinforcement learning, and other methods
        \cite{ventureTRO2016, ventureTRO2016humanoid}\textemdash in order to obtain the ``best'' set of measurements that
        resolve a parameter or the ``best-case'' mapping (either of the state-control map or of the dynamics).
        This paper uses active learning to enable robots to learn Koopman operator representations of a robot's
        own dynamic process.

        Koopman operators were first proposed in 1931 in work by B.O. Koopman~\cite{koopman1931hamiltonian}.
        At the time, approximating the Koopman operator was computationally infeasible; the onset of computers enabled
        data-driven methods that approximate the Koopman operator~\cite{tu2013dynamic, budivsic2012applied, mezic2013analysis}.
        Other research involves computation of Koopman eigenfunctions and Koopman-invariant subspaces that determine the size
        of the Koopman operator~\cite{mauroy2016global,brunton2016koopman, kaiser2017data}.
        This allows for finite dimensional Koopman operators that captures nonlinear dynamics while compressing the overall
        state dimension used to represent the dynamical system.

        Recent works, on combining model-based control methods and Koopman operators have suggested that control based on Koopman
        operators is a promising avenue for many fields including robotics
        ~\cite{brunton2016koopman, mezic2015applications, korda2016linear, sootla2017pulse, rowley2015low, proctor2016dynamic,
        surana2016koopman, Abraham-RSS-17, kaiser2017data, Broad-RSS-17}.
        In particular, recent work from the authors implemented a controller using a Koopman operator representation of a robotic
        system in an
        experimental setting of a robot in sand~\cite{Abraham-RSS-17}.
        Koopman operators are closely related to latent variable (embedded) dynamic models~\cite{watter2015embed}.
        In embedded dynamic models, an autoencoder~\cite{pu2016variational, watter2015embed} is used to compress the original
        state-space into a lower-dimensional representation.
        The embedded dynamics model then only evolves the states that are useful for predicting the overall dynamical systems
        behavior.
        Koopman operators represent the state of some dynamical system in a higher- or lower- dimensional representation where
        the evolution of the embedding is a linear dynamical systems.
        Thus, Koopman operators are a special case of an embedded dynamic model where the latent variable describes the
        nonlinearities of a dynamical system and are represented as a linear differential equation.

    \subsection{Relation to Previous Work}
        We extend previous work in~\cite{Abraham-RSS-17} with new examples of control with Koopman operator representations of
        robotic systems.
        In addition, we provide an example in Section~\ref{sec:enhanced_koopman_control} which gives further intuition for the
        use of Koopman operator dynamics.
        Moreover, we address design choices when generating a Koopman operator dynamic representation of a robotic systems and
        provide a methodology towards automating these design choices.
        Last, we introduce a method for enabling the robot to actively learn Koopman operator dynamics while taking advantage of
        linear quadratic (LQ) approaches for control.
        We note that there is no overlap with the results and the theoretical content that is presented in this paper with~\cite{Abraham-RSS-17}.

    \subsection{Outline}
    %=============================================================================

    The paper outline is as follows: Section~\ref{sec:kop} introduces the Koopman operator and data-driven methods to approximate
    the Koopman operator from data, including a recursively defined online approach for approximating the Koopman operator.
    Section~\ref{sec:enhanced_koopman_control} motivates using Koopman operator representations of dynamical systems for control.
    Section~\ref{sec:control_synthesis} introduces a controller that enables robots to learn the Koopman operator dynamics.
    Simulated results for active learning using our method is provided with comparisons in Section~\ref{sec:single_execution}.
    Section~\ref{sec:automating_function_discovery} discusses methods for automating the design specifications of the Koopman
    operator.
    Last, robot experiments are provided in Section~\ref{sec:robot_experiments} and concluding remarks in Section
    ~\ref{sec:conclusion} respectively.

\section{Koopman Operators} \label{sec:kop}

    This section introduces the Koopman operator and formulates the Koopman operator for control of robotic systems.

    \subsection{Infinite Dimensional Koopman Operator}
        Let us first define the continuous dynamical system whose state evolution is defined by
        \begin{align}\label{eq:cont_evo}
            x(t_i+t_s) & = F(x(t_i), u(t_i), t_s) \\
                     & = x(t_i) + \int_{t_i}^{t_i+t_s} f(x(s), u(s))ds, \nonumber
        \end{align}
        where $t_i$ is the $i^\text{th}$ sampling time and $t_s$ is the sampling interval,
        $x(t) : \mathbb{R} \to \mathbb{R}^n$ is the state of the robot at time $t$,
        $u(t) : \mathbb{R} \to \mathbb{R}^m$ is the applied actuation to the robot at time $t$,
        $f(x,u) : \mathbb{R}^n \times \mathbb{R}^m \to \mathbb{R}^n$ is the unknown dynamics of the robot,
        and $F(x(t_i), u(t_i), t_s)$ is the mapping which advances the state $x(t_i)$ to $x(t_i+t_s)$.
        In addition, let us define an observation function $g(x(t)) : \mathbb{R}^n \to \mathbb{R}^c \in \mathbb{C}$ where
        $\mathbb{C}$ is the space of all observation functions.
        The Koopman operator $\mathcal{K}$ is an infinite dimensional operator that directly acts on the elements of
        $\mathbb{C}$
        \begin{equation}
            \left[\mathcal{K}g\right](x(t_i)) = g(F(x(t_i), u(t_i), t_s)),
        \end{equation}
        where $u(t_i), t_s$ are implicitly defined in $F$ such that
        \begin{equation}
            \mathcal{K}g(x(t_i)) = g(F(x(t_i), u(t_i), t_s)) = g(x(t_{i+1})).
        \end{equation}
        In words, the Koopman operator $\mathcal{K}$ takes \emph{any} observation of state $g(x(t_i))$ at time $t_i$ and time
        shifts the observations, subject to the control $u(t_i)$, to the next observable time $t_{i+1}$.
        This formulation assumes equal time spacing $t_s = t_{i+1} - t_{i} = t_{i} - t_{i-1}$.

    \subsection{Approximating the Data-Driven Koopman Operator}
        The Koopman operator $\mathcal{K}$ is infeasible to compute in the infinite dimensional space.
        A finite subspace approximation to the operator $\kop \in \mathbb{R}^c \times \mathbb{R}^c$ acting on $\mathcal{C} \subset \mathbb{C}$ is
        used where we define a subset of function observables (or observations of state)
        $z(x) = \left[\psi_1(x), \psi_2(x), \ldots, \psi_c(x) \right]^\top \in \mathbb{R}^c \subset \mathcal{C}$.
        Each scalar valued $\psi_i \in \mathbb{C}$ and the span of all $\psi_i$ is the finite subspace
        $\mathcal{C} \subset \mathbb{C}$.
        The operator $\kop$ acting on $z(x(t_i))$ is then represented in discrete time as
        \begin{equation} \label{eq:discrete_kooop}
            z(x(t_{i+1})) = \kop z(x(t_i)) + r(x(t_i))
        \end{equation}
        where $r(x) \in \mathbb{C}$ is the residual function error.
        In principle, as $c \to \infty$, the residual error goes to zero~\cite{mezic2015applications, budivsic2012applied};
        however, it is sometimes possible to find $c < \infty$ such that $r(x) = 0$~\cite{brunton2016koopman}.
        Equation~(\ref{eq:discrete_kooop}) gives us the discrete time transition of observations of state in time.
        We overload the notation for the Koopman operator and write the differential equation for the observations of state as
        \begin{equation}
            \dot{z} = \kop z(x(t_i)) + r(x(t_i))
        \end{equation}
        where the continuous time $\kop$ is acquired by taking the matrix logarithm as $t_{i+1}-t_i \to 0$.

        Provided a data set $\mathcal{D} = \{x(t_m) \}_{m=0}^{M}$, we can compute the approximate Koopman operator $\kop$
        using least-squares minimization over the parameters of $\kop$:
        \begin{equation}\label{eq:regression}
            \min_{\kop} \frac{1}{2} \sum_{m=0}^{M-1} \Vert z(x(t_{m+1}) - \kop z(x(t_{m})) \Vert^2.
        \end{equation}
        Since (\ref{eq:regression}) is convex in $\kop$, the solution is given by
        \begin{equation}
            \kop = A G^\dagger
        \end{equation}
        where $\dagger$ denotes the Moore-Penrose pseudoinverse and
        \begin{align}\label{eq:ab_mat}
            A = \frac{1}{M} \sum_{m=0}^{M-1} z(x(t_{m+1}) z(x(t_m))^\top, \nonumber \\
            G = \frac{1}{M} \sum_{m=0}^{M-1} z(x(t_m)) z(x(t_m))^\top .
        \end{align}
        The continuous time operator is then given by $\log(\kop)/t_s$.
        Note that we can solve (\ref{eq:regression}) using gradient descent methods~\cite{rattray1998natural} or other
        optimization methods.
        We write a recursive least-squares update~\cite{ding2016recursive, lai1986extended} which adaptively
        updates $\kop$ as more data is acquired.

    \subsection{Koopman Operator for Control}
        The Koopman operator can include a predefined input $u$ that contributes to the evolution of $z(x(t))$.
        Consider the observable functions that includes the control input,
         $v(x,u) : \mathbb{R}^x \times \mathbb{R}^m \to \mathbb{R}^{c_u}$ where $c = c_x + c_u$.
        The resulting computed Koopman operator can be divided into sub-matrices
        \begin{equation}\label{eq:submat}
            \kop = \begin{bmatrix}
                \kop_x & \kop_u \\
                \cdot & \cdot
            \end{bmatrix},
        \end{equation}
        where $\kop_x \in \mathbb{R}^{c_x \times c_x}$ and $\kop_u \in \mathbb{R}^{c_x \times c_u}$.
        Note that the term ($\cdot$) in (\ref{eq:submat}) refers to terms that evolve the observations on control $z_u$
        which are ignored as there is no ambiguity in their evolution (they are determined by the controller).
        The Koopman operator dynamical system with control is then
        \begin{equation}\label{eq:koop_with_control}
            \dot{z}= f(z,u) = \kop_x z(x(t_i)) + \kop_u v(x(t_i), u(t_i)).
        \end{equation}

        Note that the data set $\mathcal{D}$ must now store $u(t_i), u(t_{i+1})$ in order to compute the Koopman operator matrix $\kop_u$.

\section{Enhancing Control Authority with Koopman Operators} \label{sec:enhanced_koopman_control}
    Koopman operators map dynamic constraints into a linear dynamical system in a modified state-space.
    The Koopman operator structure allows one to use linear quadratic (LQ) control methods to compute optimal controllers for
    nonlinear systems that can often outperform locally optimal LQ controllers obtained through linearizing the nonlinear dynamics model.

    Let us consider control of the nonlinear forced Van der Pol oscillator, the dynamics of which are defined in
    Appendix~\ref{app:vdp_dynamics}, as an example.
    We specify the control task as minimizing the following LQ objective
    \begin{multline}
        J = \int_{t_i}^{t_i +T} x(t)^\top \mathbf{Q} x(t) + u(t)^\top \mathbf{R} u(t) dt + \\x(t_i+T)^\top
    \mathbf{Q}_f x(t_i+T)
    \end{multline}
    where $\mathbf{Q} \in \mathbb{R}^{n \times n}$, $\mathbf{R} \in \mathbb{R}^{m \times m}$, and
    $\mathbf{Q}_f  \in \mathbb{R}^{n \times n}$.
    Choosing the set of function observable (Appendix~\ref{app:vdp_dynamics}), we can compute a Koopman operator $\kop$ by
    repeated simulation of the Van der Pol oscillator subject to uniformly random control inputs for $5000$ randomly
    sampled initial conditions.

    Since the Van der Pol oscillator dynamics are nonlinear, a solution to the LQ control problem is to linearize the dynamics
    about the equilibrium state $x_t = \left[ 0, 0\right]^\top$ and form a linear quadratic control regulator (LQR).
    Using the Kooman operator formulation of the Van der Pol dynamics, we can compute a controller in a similar manner using
    the following objective
    \begin{multline}
        J = \int_{t_i}^{t_i + T} z(t)^\top \tilde{\mathbf{Q}} z(t) + u(t)^\top \mathbf{R} u(t)dt + \\ z(t_i+T)^\top \tilde{\mathbf{Q}}_f z(t_i+T)
    \end{multline}
    where
    \begin{equation}\label{eq:expanded_weights}
        \tilde{\mathbf{Q}} = \begin{bmatrix}
        \mathbf{Q} & \bold{0} \\
        \bold{0} & 0
        \end{bmatrix} \in \mathbb{R}^{c_x \times c_x} \text{ and }
        \tilde{\mathbf{Q}}_f = \begin{bmatrix}
        \mathbf{Q}_f & \bold{0} \\
        \bold{0} & 0
        \end{bmatrix} \in \mathbb{R}^{c_x \times c_x}.
    \end{equation}
    Setting $\tilde{\mathbf{Q}}$ and $\tilde{\mathbf{Q}}_f$ to only include the state observables allows us to compare the same
    control objective using the linearized dynamics against the Koopman operator dynamics where the first terms in the function
    observable $z(x(t))$ is the state of the Van der Pol system itself.

    \begin{figure}[h!]
      \centering
      \begin{subfigure}{0.23\textwidth}
        \centering
        \includegraphics[width=\linewidth]{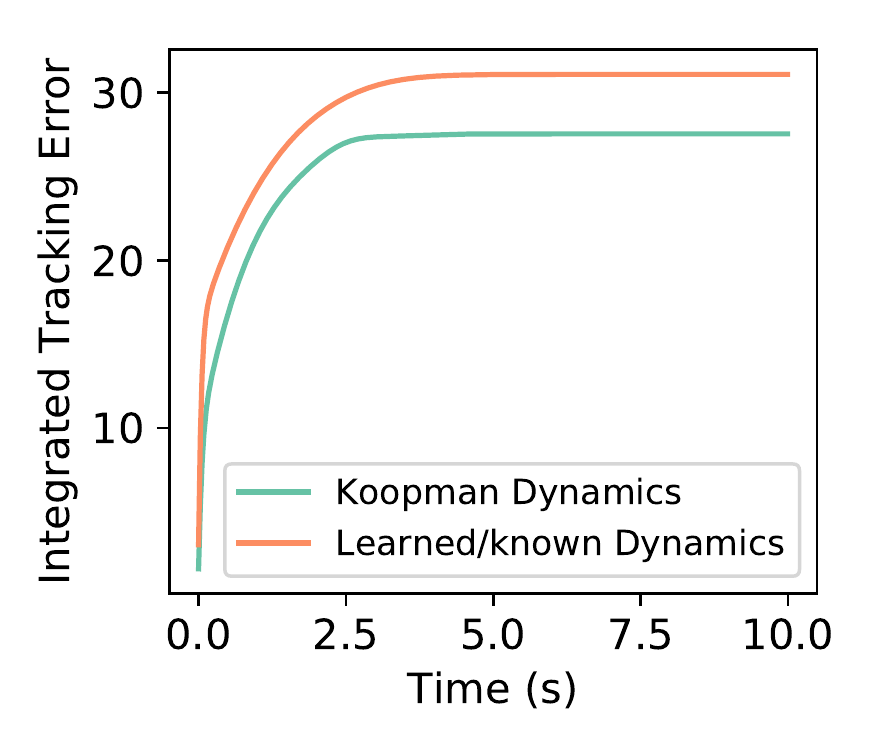}
        \caption{Int. Tracking Error}
        \label{fig:improved_control_error}
      \end{subfigure}
      % \hspace{5mm}
      \begin{subfigure}{0.23\textwidth}
        \centering
        \includegraphics[width=\linewidth]{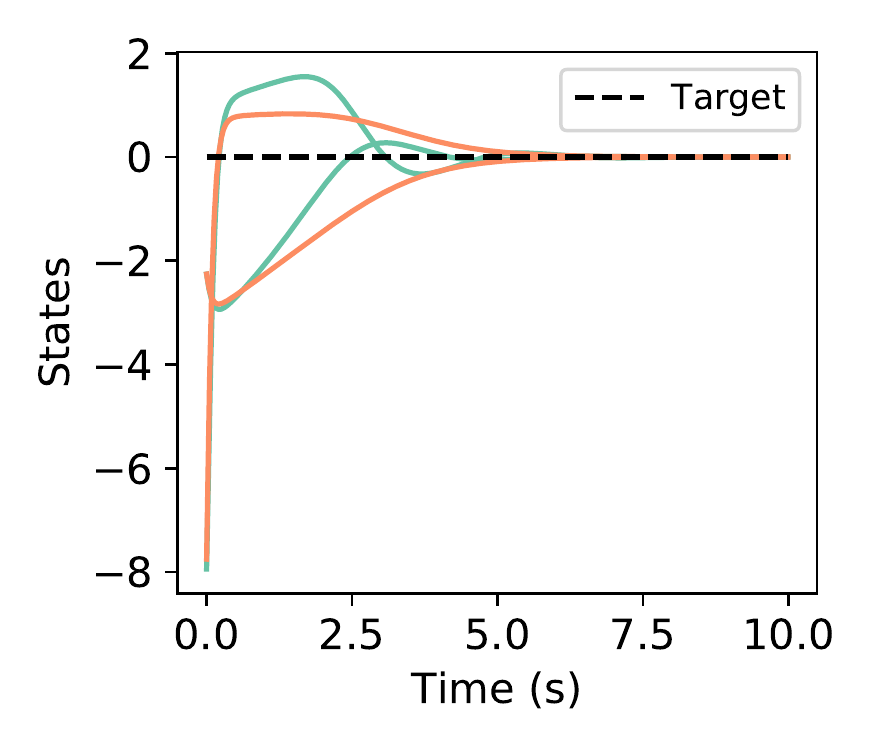}
        \caption{Trajectory}
        \label{fig:improved_control_trajectory}
      \end{subfigure}
      \caption{
                Control performance of a forced van der pol oscillator with an LQR control using the learned Koopman operator,
                the linearization of the known system dynamics, and the linearizion of a learned state-space model using the
                same data and basis functions as the Koopman operator.
                The control performance using the Koopman operator dynamics is shown to outperform the LQR control with known
                dynamics.
                The learned dynamics model performs equally to the known dynamics model and is overlayed on top of the
                known dynamics results.
            }
      \label{fig:ehanced_control_koopman}
    \end{figure}

    Figure~\ref{fig:ehanced_control_koopman} illustrates the improvement in control performance when using the the Koopman
    operator dynamics for LQ control instead of linearizing the dynamics around a local region.
    We compare the control authority using a learned dynamics model in the original state-space using Bayesian optimization with
    the same functions used for the Koopman operator.
    This illustrates that the data used to compute the Koopman operator can learn a nonlinear model of the Van der Pol dynamics
    in the original state-space.
    The Koopman operator formulation of the Van der Pol approximates the dynamic constraints as a
    linear dynamical systems in a higher dimensional space that captures nonlinear dynamical behavior.
    As a result, the Koopman operator formulation coupled with LQ methods can be used to enhance the control the Van der Pol
    system as shown in Figure~\ref{fig:improved_control_trajectory}.
    Computing the resulting trajectory error (Figure~\ref{fig:improved_control_error}) shows that the trajectory taken from the
    Koopman operator controller results in less overall integrated error.
    This is due to formulating the LQ controller with additional information in the form of a dynamical systems that evolves
    functions of state.

    While this example illustrates the possible benefits of utilizing the Koopman operator formulation, we ignored how the data
    was collected for the Van der Pol dynamical system.
    In fact, computing the Koopman operator used random inputs.
    For this example, such an approach works reasonably, but requires a significant amount of data to fully cover the state-space
    of the Van der Pol system.
    The following sections introduce a method that enables a robot to actively learn the Koopman operator.

\section{Control Synthesis for Active Learning of Koopman Operator Dynamics} \label{sec:control_synthesis}
    Active learning controllers need to consider existing polices that solve a task while generalizing to learning objectives.
    In this section, we formulate a controller for active learning that takes into account the Koopman operator dynamics as well
    as polices generated for solving tasks using the Koopman operator linear dynamics.
    We generate an active learning controller that takes into account existing policies by first deriving the
    mode insertion gradient~\cite{egerstedt2003optimal, axelsson2008gradient}.
    The mode insertion gradient calculates how an objective changes when switching from one control strategy to another.
    We then formulate an active learning controller by minimizing the mode insertion gradient
    while including policies that solve a specified task.
    \footnote{During training, the policies derived from the Koopman operator dynamics will be inaccurate; however, over time and gathered experience, both the model and policy will converge. This is a common approach
    in most model-based reinforcement learning techniques~\cite{atkeson1997comparison}.  }
    The derived controller is then shown to increase the rate of change of the information measure,
    which guides the robot towards important regions of state-space, improving the data collection and the quality of
    the learned Koopman operator dynamics model.

    \subsection{Control Formulation}
        Active learning allows a robotic agent to self-excite the dynamical states in order to collect data that results in a
        Koopman operator $\kop$ that can be used describe system evolution.
        We formulate the active learning problem as a hybrid switching problem~\cite{ansari2016sequential} where the goal is to
        switch between a policy for a task to an information maximizing controller that assists the dynamical system in
        collecting informative data.

        Consider a general objective function of the form
        \begin{equation}\label{eq:gen_obj}
            J = \int_{t_i}^{t_i+T} \ell( z(s), \mu(z(s))) ds + m(z(t_i+T))
        \end{equation}
        where  $z(t) : \mathbb{R} \to \mathbb{R}^{c_x}$ is the value of the function observables at time $t$ subject to the
        Koopman dynamics in (\ref{eq:koop_with_control}) starting from initial condition $z(x(t_i))$,
        $\ell(z, u) : \mathbb{R}^{c_x} \times \mathbb{R}^m \to \mathbb{R}$ is the running cost,
        $m(z) : \mathbb{R}^{c_x} \to \mathbb{R}$ is the terminal cost,
        and $\mu(z) : \mathbb{R}^{c_x} \to \mathbb{R}^{c_u}$ is a $\mathcal{C}^1$ differentiable policy.
        In this work, the running cost is split into two parts:
        \begin{equation*}
            \ell(z,u) = \ell_\text{learn}(z,u) + \ell_\text{task}(z,u)
        \end{equation*}
        where $\ell_\text{learn}$ is the information maximizing objective (learning task) and $\ell_\text{task}(z,u)$ is
        the task objective for which the policy $\mu(z)$ is a solution to (\ref{eq:gen_obj}) when $\ell_\text{learn} = 0$.

        Given equation (\ref{eq:gen_obj}), we want to synthesize a controller that is bounded
        to the policy $\mu(z)$, but also allows for improvement of an information measure for active learning.
        To do so, we examine in Proposition~(\ref{prop:1}) how sensitive (\ref{eq:gen_obj}) is to switching
        between the policy $\mu(z)$ to an
        arbitrary control vector $\mu_\star(t)$ at time $\tau$ for a time duration $\lambda$.
        \begin{prop} \label{prop:1}
            The sensitivity of switching from $\mu$ to $\mu_\star$ for all $\tau \in \left[t_i, t_i+T \right]$
            for an infinitesimally small $\lambda$,
            (also known as the mode insertion gradient~\cite{egerstedt2003optimal, axelsson2008gradient}) is given by
            \begin{equation}\label{eq:mode_insertion_gradient}
                \frac{\partial J}{\partial \lambda} \Big\vert_{\tau, \lambda=0} = \rho(\tau)^\top (f_2 - f_1)
            \end{equation}
                where $z(t)$ is a solution to \ref{eq:koop_with_control} with $u(t) = \mu(z(t))$ and $z(t_i) = z(x(t_i))$,
                $f_2 = f(z(\tau), \mu_\star(\tau))$,
                $f_1 = f(z(\tau), \mu(z(\tau)))$, and
            \begin{equation}
            \dot{\rho} =
            -\left(
                \frac{\partial \ell}{\partial z} + \frac{\partial \mu}{\partial z}^\top \frac{\partial \ell}{\partial u}
            \right)
            - \left(
                \frac{\partial f}{\partial z} + \frac{\partial f}{\partial u} \frac{\partial \mu}{\partial z}
            \right) ^\top \rho
            \end{equation}
            subject to the terminal condition $\rho(t_i+T) = \frac{\partial}{\partial z} m(z(t_i+T))$.
        \end{prop}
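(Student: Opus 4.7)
The plan is to carry out the standard mode insertion gradient calculation via the adjoint (costate) method, applied to the closed-loop system driven by the baseline policy $\mu$. Let $z_0(t)$ denote the nominal trajectory generated by $u=\mu(z)$ from $z(x(t_i))$, and $z_\lambda(t)$ the perturbed trajectory that applies $\mu_\star(t)$ on $[\tau,\tau+\lambda]$ and reverts to $\mu(z_\lambda)$ outside that window. I will compute $\left.\partial J/\partial\lambda\right|_{\lambda=0^+}$ in three steps: obtain the first-order state variation at the end of the switching window, propagate this variation forward through the linearized closed-loop dynamics, and then collapse the resulting integral against the adjoint $\rho$.

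For the first step I Taylor-expand the two trajectories on $[\tau,\tau+\lambda]$: both share the value $z_0(\tau)$ at $t=\tau$ and follow $\dot z_\lambda=f(z_\lambda,\mu_\star)$ and $\dot z_0=f(z_0,\mu(z_0))$ respectively, so $z_\lambda(\tau+\lambda)-z_0(\tau+\lambda)=\lambda(f_2-f_1)+o(\lambda)$. Hence the one-sided variation $\delta z(t):=\left.\partial_\lambda z_\lambda(t)\right|_{\lambda=0^+}$ vanishes for $t<\tau$ and equals $f_2-f_1$ at $t=\tau^+$. For $t>\tau$ both trajectories obey the closed-loop ODE, and linearizing $f(z,\mu(z))$ about $z_0(t)$ gives
\begin{equation*}
\dot{\delta z} = \left(\frac{\partial f}{\partial z} + \frac{\partial f}{\partial u}\frac{\partial \mu}{\partial z}\right)\delta z.
\end{equation*}
Differentiating $J$ through the integral and applying the chain rule to $\ell(z,\mu(z))$ then yields
\begin{equation*}
\frac{\partial J}{\partial\lambda}\bigg|_{0} = \int_\tau^{t_i+T}\!\left(\frac{\partial \ell}{\partial z}+\left(\frac{\partial \mu}{\partial z}\right)^{\!\top}\!\frac{\partial \ell}{\partial u}\right)^{\!\top}\!\delta z\,dt + \frac{\partial m}{\partial z}\,\delta z(t_i+T).
\end{equation*}
Defining $\rho$ by the backward ODE in the statement with terminal condition $\rho(t_i+T)=\partial m/\partial z$, I would then compute $\tfrac{d}{dt}(\rho^\top\delta z)$; the adjoint equation is engineered so that this derivative exactly cancels the integrand above, and integrating from $\tau$ to $t_i+T$ collapses everything to the boundary value $\rho(\tau)^\top\delta z(\tau^+)=\rho(\tau)^\top(f_2-f_1)$, which is the claimed identity.

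The hard part is making the first step rigorous: I must show that the perturbed state deviates from $z_0$ by only $O(\lambda)$ on the switching window, so that the leading-order Taylor expansion is valid and the right-limit $\delta z(\tau^+)=f_2-f_1$ is the correct ``initial condition'' for the forward propagation. Under $\mathcal{C}^1$ regularity of $f$, $\ell$, $m$, and $\mu$, together with local boundedness of $\mu_\star$, a Gronwall-type estimate on the closed-loop flow handles this, and everything that follows is routine adjoint bookkeeping. The one genuinely algebraic point to track is that $\mu$ enters both $\ell(z,\mu(z))$ and $f(z,\mu(z))$ through $z$; this is precisely why the $\partial\mu/\partial z$ terms appear in both the cost gradient and the dynamics linearization, and correspondingly in the stated adjoint ODE.
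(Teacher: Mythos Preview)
Your argument is correct and is the standard adjoint derivation of the mode insertion gradient. The paper's proof reaches the same formula by a slightly different packaging: rather than defining $\rho$ by its backward ODE and collapsing the cost integral via the duality identity $\tfrac{d}{dt}(\rho^\top\delta z)=-(\partial_z\ell+(\partial_z\mu)^\top\partial_u\ell)^\top\delta z$, the paper writes the forward variation as $\partial z(t)/\partial\lambda=\Phi(t,\tau)(f_2-f_1)$ using the state-transition matrix $\Phi$ of the linearized closed loop, and then \emph{defines} $\rho(\tau)^\top=\int_\tau^{t_i+T}(\partial_z\ell+(\partial_z\mu)^\top\partial_u\ell)^\top\Phi(t,\tau)\,dt$, from which the backward ODE is read off. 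The two routes are equivalent---your $\delta z(t)$ is precisely $\Phi(t,\tau)(f_2-f_1)$, and the paper's integral formula for $\rho$ is the variation-of-constants solution of your backward ODE. Your version has the minor advantage that the terminal-cost contribution $\partial_z m\cdot\delta z(t_i+T)$ is carried explicitly and absorbed cleanly into the terminal condition on $\rho$, whereas the paper's integral expression for $\rho$ omits this term in the body of the derivation and then imposes $\rho(t_i+T)=\partial_z m$ somewhat after the fact.
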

        \begin{proof}
            See Appendix~\ref{app:proof_prop1}.
        \end{proof}

        We can write an unconstrained optimization problem for calculating $\mu_\star(\tau)$ over the interval
        $\tau \in \left[t_i, t_i+T\right]$ that will minimize the mode insertion gradient.
        We can write this optimization problem using a secondary objective function
        \begin{equation}\label{eq:secondary_objective}
            J_2 = \int_{t_i}^{t_i+T} \frac{\partial J}{\partial \lambda} \Big\vert_{\tau=t, \lambda=0}+ \frac{1}{2}\Vert \mu_\star(t)-\mu(z(t))
            \Vert_{\tilde{\bold{R}}}^2 dt,
        \end{equation}
        where $\tilde{\mathbf{R}} \in \mathbb{R}^{m \times m}$ bounds the change of $\mu_\star$ to $\mu(z)$,
        and $\frac{\partial J}{\partial \lambda}\Big\vert_{\tau=t, \lambda=0}$ is evaluated at $\tau = t$.
        Solving equation (\ref{eq:secondary_objective}) with respect to $\mu_\star(t)$ can be viewed as a functional optimization
        over $\mu_\star(t) \forall t \in \left[t_i, t_i+T \right]$.
        Since equation (\ref{eq:secondary_objective}) is quadratic in $\mu_\star$, we can compute a closed form solution
        for any application time $\tau \in \left[ t_i, t_i + T \right]$.
        \begin{prop}\label{prop:2}
            Assuming that $v(x,u)$ is differentiable, the control solution that minimizes (\ref{eq:secondary_objective}) is
            \begin{equation}\label{eq:sac_control}
                \mu_\star(t) = -\tilde{\bold{R}}^{-1} \left( \kop_u \frac{\partial v}{ \partial u } \right)^\top \rho(t) + \mu(z(t)).
            \end{equation}
        \end{prop}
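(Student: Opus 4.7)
The plan is to substitute the expression for the mode insertion gradient from Proposition~\ref{prop:1} into the secondary objective (\ref{eq:secondary_objective}), exploit the fact that the Koopman control dynamics (\ref{eq:koop_with_control}) are affine in the observable $v$, and then minimize pointwise in $t$ to obtain a closed-form expression for $\mu_\star(t)$.

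First, I would use $f(z,u)=\kop_x z + \kop_u v(x,u)$ to write
\begin{equation*}
f_2 - f_1 = \kop_u\bigl[v(x,\mu_\star(t)) - v(x,\mu(z(t)))\bigr],
\end{equation*}
so that the integrand of $J_2$ depends on $\mu_\star$ only through the term
\begin{equation*}
L(t,\mu_\star) = \rho(t)^\top \kop_u\, v(x,\mu_\star(t)) + \tfrac{1}{2}\|\mu_\star(t)-\mu(z(t))\|_{\tilde R}^2,
\end{equation*}
plus terms that are independent of $\mu_\star$. Since $\rho(t)$ and $z(t)$ are determined by $\mu$ and the initial condition (they do not depend on $\mu_\star$), the minimization of $J_2$ over the function $\mu_\star(\cdot)$ decouples across time: the minimizer $\mu_\star(t)$ at each $t \in [t_i,t_i+T]$ is the minimizer of $L(t,\cdot)$.

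Next I would take the first-order condition of $L(t,\mu_\star)$ with respect to $\mu_\star$. Using differentiability of $v$ in $u$,
\begin{equation*}
\frac{\partial L}{\partial \mu_\star} = \left(\kop_u \frac{\partial v}{\partial u}\right)^\top \rho(t) + \tilde R\,(\mu_\star(t)-\mu(z(t))) = 0,
\end{equation*}
which I would solve for $\mu_\star(t)$ to recover exactly (\ref{eq:sac_control}). Convexity of the quadratic penalty $\tfrac{1}{2}\|\mu_\star-\mu\|_{\tilde R}^2$ (with $\tilde R \succ 0$) guarantees this stationary point is the unique global minimizer, provided the linearization in $u$ of $v$ used above captures the pointwise optimum; if $v$ is affine in $u$ (as is standard for Koopman control observables) this is exact, and otherwise the result should be interpreted as the first-order necessary condition.

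The main obstacle is the subtlety of justifying the pointwise-in-$t$ minimization, that is, showing that any admissible variation of $\mu_\star(\cdot)$ must minimize the integrand at (almost) every $t$. This is standard given that $\rho(t)$ and $z(t)$ are fixed functions independent of $\mu_\star$, but it requires noting that the perturbation under consideration in Proposition~\ref{prop:1} is an infinitesimal mode insertion at a single time $\tau$, so the mode insertion gradient depends only on the instantaneous value $\mu_\star(\tau)$ rather than on the entire trajectory. A secondary technical point is the interpretation of $\partial v/\partial u$ when $v$ is not affine in $u$; I would either assume affine dependence, which is typical in the Koopman control literature and consistent with the construction of $v$ in Section~\ref{sec:kop}, or present (\ref{eq:sac_control}) as the stationary point of the (locally quadratic) secondary objective.
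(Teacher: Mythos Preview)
Your proposal is correct and follows essentially the same approach as the paper: exploit separability of $J_2$ in time, differentiate the integrand with respect to $\mu_\star(t)$ pointwise, and solve the resulting linear equation. The paper's proof is terser---it simply asserts separability and writes down the derivative---whereas you spell out explicitly why $f_2-f_1=\kop_u[v(x,\mu_\star)-v(x,\mu)]$, why $\rho$ and $z$ are independent of $\mu_\star$, and when the stationary point is a genuine minimizer (convexity, affine $v$); these additions are all sound and strengthen rather than alter the argument.
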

        \begin{proof}
            Since (\ref{eq:secondary_objective}) is separable in time, we take the derivative of (\ref{eq:secondary_objective})
            with respect to $\mu_\star(t)$ at each point in $t$ which gives the following expression:
            \begin{align}\label{eq:j2dmu}
                \frac{\partial}{\partial \mu_\star} J_2 &= \int_{t_i}^{t_i+T} \frac{\partial}{\partial \mu_\star} \left(
                \rho(t)^\top \left( f_2 - f_1 \right)
                \right) + \tilde{\mathbf{R}} \left( \mu_\star(t) - \mu(z(t) \right) dt \nonumber \\
                & = \int_{t_i}^{t_i+T}  \left( \kop_u \frac{\partial v}{ \partial u} \right)^\top \rho(t) + \tilde{\mathbf{R}} (\mu_\star (t) - \mu(z(t))) dt.
            \end{align}
            Solving for $\mu_\star(t)$ in (\ref{eq:j2dmu}) gives the control solution
            \begin{equation*}
                \mu_\star(t) = -\tilde{\bold{R}}^{-1} \left( \kop_u \frac{\partial v}{ \partial u } \right)^\top \rho(t) + \mu(z(t)).
            \end{equation*}
        \end{proof}

        Proposition~(\ref{prop:2}) gives a formula for switching from $\mu_\star(t)$ to improve the objective~(\ref{eq:gen_obj}).
        We can use equation~(\ref{eq:sac_control}) with (\ref{eq:mode_insertion_gradient}) to show that our approach improves
        the active learning objective subject to bounds placed on arbitrary tasks included in (\ref{eq:gen_obj}).

        \begin{corollary}
            \label{cor:neg_mode_insert}
            Assume that the Koopman operator dynamics for a system are defined by the following control affine structure:
            \begin{equation}\label{eq:control_affine_koopman}
                \dot{z} = \kop_x z(x(t)) + \kop_u v(x(t)) u(t)
            \end{equation}
            where $v(x) : \mathbb{R}^{n} \to \mathbb{R}^{c_u \times m}$.
            \footnote{This formulation assumes that we can recover $x(t)$ from $z(t)$ for computing $v(x)$.}
            Moreover, assume that $\frac{\partial}{\partial \mu} \mathcal{H} \neq 0$ where $\mathcal{H}$ is the control Hamiltonian for (\ref{eq:gen_obj}).
            Then
            \begin{equation} \label{eq:djdlam_neg}
                \frac{\partial}{\partial \lambda} J = -\Vert \left(\kop_u v(x) \right)^\top \rho \Vert_{\tilde{{\mathbf{R}}}^{-1}}^2< 0
            \end{equation}
            for $\mu_\star(t) \in \mathcal{U}$ $\forall t \in \left[t_i, t_i+T\right]$ where $\mathcal{U}$ is the control space.
        \end{corollary}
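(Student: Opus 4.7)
The plan is to substitute the control-affine structure from equation~(\ref{eq:control_affine_koopman}) into the mode insertion gradient of Proposition~\ref{prop:1}, then substitute the minimizing control from Proposition~\ref{prop:2}, and finally verify strict negativity via the Hamiltonian assumption.

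First I would evaluate $f_2 - f_1$ under the control-affine dynamics. Since $f(z,u) = \kop_x z(x) + \kop_u v(x) u$, we have $f_1 = \kop_x z + \kop_u v(x)\mu(z)$ and $f_2 = \kop_x z + \kop_u v(x)\mu_\star$, so the drift cancels and $f_2 - f_1 = \kop_u v(x)\bigl(\mu_\star - \mu(z)\bigr)$. Substituting into the mode insertion gradient (\ref{eq:mode_insertion_gradient}) yields $\tfrac{\partial J}{\partial\lambda} = \rho^\top \kop_u v(x)\bigl(\mu_\star - \mu(z)\bigr)$.

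Next I would invoke Proposition~\ref{prop:2}. In the control-affine case, $v(x,u) = v(x)u$ so $\partial v/\partial u = v(x)$ and the optimizer becomes $\mu_\star(t) - \mu(z(t)) = -\tilde{\mathbf{R}}^{-1}\bigl(\kop_u v(x)\bigr)^\top \rho(t)$. Plugging this difference back into the expression from the previous step produces a quadratic form in $\bigl(\kop_u v(x)\bigr)^\top \rho$ with kernel $\tilde{\mathbf{R}}^{-1}$, namely $\tfrac{\partial J}{\partial\lambda} = -\bigl\lVert (\kop_u v(x))^\top \rho \bigr\rVert_{\tilde{\mathbf{R}}^{-1}}^2$, which is the asserted identity. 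Non-positivity then follows immediately from positive definiteness of $\tilde{\mathbf{R}}$ (hence of $\tilde{\mathbf{R}}^{-1}$).

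The only nontrivial step is upgrading $\le 0$ to the strict inequality $<0$, and this is where the Hamiltonian hypothesis enters. The control Hamiltonian for (\ref{eq:gen_obj}) under the control-affine dynamics is $\mathcal{H} = \ell(z,u) + \rho^\top\bigl(\kop_x z + \kop_u v(x)u\bigr)$, so $\partial \mathcal{H}/\partial \mu = \partial \ell/\partial u + \bigl(\kop_u v(x)\bigr)^\top \rho$. The assumption $\partial \mathcal{H}/\partial \mu \neq 0$ precisely rules out the degenerate case in which $\mu_\star = \mu(z)$ (i.e., where the quadratic form collapses to zero); one needs to check that $\partial \ell/\partial u$ does not conspire to cancel $(\kop_u v(x))^\top \rho$, which is the subtle point. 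Assuming this non-cancellation — or equivalently choosing $\ell_\text{learn}$ so that the two terms cannot exactly cancel — yields $\bigl(\kop_u v(x)\bigr)^\top \rho \neq 0$, and then $\tfrac{\partial J}{\partial \lambda} < 0$ strictly, as claimed in (\ref{eq:djdlam_neg}). The admissibility hypothesis $\mu_\star(t) \in \mathcal{U}$ simply ensures the closed-form optimizer of Proposition~\ref{prop:2} is feasible, so no projection step is needed.
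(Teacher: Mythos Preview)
Your approach is essentially identical to the paper's: the paper simply inserts the closed-form optimizer (\ref{eq:sac_control}) into the mode insertion gradient (\ref{eq:mode_insertion_gradient}) and reads off the negative quadratic form, which is exactly your first two steps. Your treatment is in fact more careful than the paper's, since the paper asserts the strict inequality $<0$ without comment, whereas you correctly flag that the Hamiltonian hypothesis $\partial\mathcal{H}/\partial\mu \neq 0$ only gives $\partial\ell/\partial u + (\kop_u v(x))^\top\rho \neq 0$, not $(\kop_u v(x))^\top\rho \neq 0$ directly; the paper does not resolve this subtlety either.
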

        \begin{proof}
            Inserting (\ref{eq:sac_control}) into (\ref{eq:mode_insertion_gradient}) gives
            \[
            \frac{\partial}{\partial \lambda} J = \rho(t)^\top \left(\kop_u v(x(t)) \right)\left(
            - \tilde{\mathbf{R}}^{-1} \left( \kop_u v(x(t)) \right) ^\top \rho(t)
            \right)
            \]
            which can be written as the norm
            \[
            \frac{\partial}{\partial \lambda} J = -\Vert \left(\kop_u v(x) \right)^\top \rho \Vert_{\tilde{\mathbf{R}}^{-1}}^2< 0.
            \]
        \end{proof}

        Because we define our objective to be reasonably general, we can add both stabilization terms as well as information measures that allow
        a robot to actively identify its own dynamics.
        The following subsection provides an overview of the Fisher information measure and information bounds based on our controller.
        We first describe the Fisher information matrix for the Koopman operator parameters and then generate an information measure.
        We then show that using (\ref{eq:sac_control}) and Corollary~\ref{cor:neg_mode_insert}, that we can approximately calculate to first order
        the gain in information.

    \subsection{Information Maximization}
        Using the controller defined in (\ref{eq:sac_control}), we investigate information measures that we can use in (\ref{eq:gen_obj}) to enable
        the robot to actively learn the Koopman operator dynamics.
        In this work, we use the Fisher information~\cite{pukelsheim2006optimal,cover2012elements} to generate a information measure for active learning.
        The Fisher information is a way of measuring how much information a random variable has about a set of parameters.
        If we treat calculating the Koopman operator dynamics as a maximum likelihood estimation problem where the likelihood is given by
        $\pi( z \mid \kop) : \mathbb{R}^{c_x} \to \mathbb{R}^+$, we can compute the Fisher information matrix over the parameters
        that compose of the Koopman operator $\kop$.
        The Fisher information matrix is computed as
        \begin{equation}\label{eq:fisher_exp}
            \bold{I} \left[ z \mid \kop \right] =
            \mathbb{E} \left[ \frac{\partial}{\partial \kappa } \log \pi( z \mid \kop) ^\top \frac{\partial }{\partial \kappa} \log \pi (z \mid \kop) \right] \in \mathbb{R}^{\vert \kappa \vert^2}
        \end{equation}
        where $\mathbb{E}$ is the expectation operator, $\kappa = \{ \kop_{i,j} \mid \kop_{i,j} \in \kop \}$,
        and $\vert \kappa \vert$ is the cardinality of the vector $\kappa$.
        Assuming that $\pi$ is a Gaussian distribution, (\ref{eq:fisher_exp}) becomes
        \begin{equation}
            \bold{I} \left[ z \mid \kop \right] = \frac{\partial f}{\partial \kappa}^\top \Sigma^{-1} \frac{\partial f}{\partial \kappa}
        \end{equation}
        where $\Sigma \in \mathbb{R}^{c_x \times c_x}$ is the noise covariance matrix.
        Because the Fisher information defined here is positive semi-definite, we use the trace of the Fisher information
        matrix~\cite{nahi1971design} in $\ell (z, u)$.
        %(which is approximately equivalent to $\Vert z \Vert^2$).
        This measure allows us to synthesize control actions that maximize the T-optimality measure of the Fisher information matrix~\cite{nahi1971design}.

        \begin{definition}
            The T-optimality measure is given by the trace of the Fisher information matrix (\ref{eq:fisher_exp}) and defined as
            \begin{equation}\label{eq:t-optimality}
                \mathfrak{I}(\kop) =\text{tr }\mathbf{I}\left[z \mid \kop \right] \ge 0.
            \end{equation}
        \end{definition}
        In this work we incorporate (\ref{eq:t-optimality}) into (\ref{eq:gen_obj}) additively using $1/(\mathfrak{I} + \epsilon)$,
        that is
        \begin{equation*}
            \ell_\text{learn}(z,u) = 1/(\mathfrak{I}(\kop) + \epsilon)
        \end{equation*}
        where $\epsilon \ll 1$ is a small number to prevent singular solutions due to the positive semi-definite Fisher information matrix~\cite{morimura2005utilizing, wei2008dynamics, inoue2003line},
        and $\mathfrak{I}$ is computed using the evaluation of $\kop$ at time $t_i$.
        By minimizing (\ref{eq:gen_obj}) we also minimize the inverse of the T-optimality (which maximizes the T-optimality).

        \begin{assumption}\label{ass:1}
            Assume that $\mathfrak{I}(\tilde{\kop}) > 0$ implies $\mathfrak{I}(\kop)>0$ where $\tilde{\kop}$ is an approximation to the
            Koopman operator $\kop$ computed from the data set $\mathcal{D} = \{ x(t_m), u(t_m)\}_{m=0}^i$ that contains data up until the current sampling time $t_i$.
        \end{assumption}

        \begin{theorem}
            \label{thm:inf}
            Given Assumption~\ref{ass:1} and dynamics~(\ref{eq:control_affine_koopman}), then the change in information
            \footnote{With respect to the information acquired from applying only $\mu(z)$.}
            $\Delta \mathbf{I}$ subject to (\ref{eq:sac_control}) is given to first order
            \begin{multline}
                \Delta \mathbf{I} \approx
                \left(
                    \Vert ( \kop_u v(x) \right)^\top \rho \Vert_{\tilde{\mathbf{R}}^{-1}}^2 + \ell_\text{task}(z, \mu_\star)
                    \\ - \ell_\text{task}(z, \mu)
                )  \mathfrak{I}_{\mu_\star}\mathfrak{I}_{\mu} + \mathcal{O}(\Delta t),
            \end{multline}
            where $\mathfrak{I}_{\mu_\star}$, $\mathfrak{I}_{\mu}$ is the T-optimality measure (\ref{eq:t-optimality}) from applying the control
            $\mu_\star$ and $\mu$.
        \end{theorem}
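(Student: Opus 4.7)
The plan is to convert $\Delta \mathbf{I} = \mathfrak{I}_{\mu_\star}-\mathfrak{I}_\mu$ into a quantity expressible through the mode insertion gradient of Proposition~\ref{prop:1}, then invoke Corollary~\ref{cor:neg_mode_insert} under the control-affine Koopman structure (\ref{eq:control_affine_koopman}) to produce the $\Vert(\kop_u v(x))^\top\rho\Vert_{\tilde{\mathbf{R}}^{-1}}^2$ factor.

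First I would use Assumption~\ref{ass:1}, which guarantees $\mathfrak{I}_\mu,\mathfrak{I}_{\mu_\star}>0$, to rewrite
\[
\mathfrak{I}_{\mu_\star}-\mathfrak{I}_\mu \;=\; -\,\mathfrak{I}_{\mu_\star}\mathfrak{I}_\mu\!\left(\tfrac{1}{\mathfrak{I}_{\mu_\star}}-\tfrac{1}{\mathfrak{I}_\mu}\right),
\]
and absorb the regularizer $\epsilon \ll 1$ into an $\mathcal{O}(\Delta t)$ remainder so that the parenthesized quantity is exactly $\ell_\text{learn}(z,\mu_\star)-\ell_\text{learn}(z,\mu)$. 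This already yields
\[
\Delta\mathbf{I} \;=\; \mathfrak{I}_{\mu_\star}\mathfrak{I}_\mu\bigl[\ell_\text{learn}(z,\mu)-\ell_\text{learn}(z,\mu_\star)\bigr]+\mathcal{O}(\Delta t),
\]
reducing the theorem to identifying $\ell_\text{learn}(z,\mu)-\ell_\text{learn}(z,\mu_\star)$ with $\Vert(\kop_u v(x))^\top\rho\Vert_{\tilde{\mathbf{R}}^{-1}}^2+\ell_\text{task}(z,\mu_\star)-\ell_\text{task}(z,\mu)$ to first order.

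Next I would Taylor-expand the first-order change in $J$ produced by substituting $\mu_\star$ for $\mu$ over a switching window of length $\Delta t$ at $\tau = t_i$. Writing $\ell=\ell_\text{learn}+\ell_\text{task}$, the pointwise running-cost jump contributes $\Delta t[\ell(z,\mu_\star)-\ell(z,\mu)]$, while the downstream state-propagation effect captured by Proposition~\ref{prop:1} contributes $\Delta t\,\rho(t_i)^\top(f_2-f_1)$. Substituting the closed form (\ref{eq:sac_control}) and invoking Corollary~\ref{cor:neg_mode_insert} collapses the propagation piece to $-\Delta t\,\Vert(\kop_u v(x))^\top\rho\Vert_{\tilde{\mathbf{R}}^{-1}}^2$. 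Equating this first-order expansion to the equivalent direct evaluation of $\Delta J$ and isolating the $\ell_\text{learn}$ difference produces exactly the identity required above, so substituting back through Step~1 recovers the claim.

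The main obstacle will be the careful bookkeeping between the dynamics-only form $\rho^\top(f_2-f_1)$ in Proposition~\ref{prop:1} and the immediate pointwise cost change, since $\ell$ here depends explicitly on $u$ through $u^\top\mathbf{R}u$; one has to justify that combining the two pieces consistently (or, equivalently, working with the augmented cost-state whose costate has terminal value one) is what produces Corollary~\ref{cor:neg_mode_insert}'s quadratic form together with the correct $\ell_\text{task}$ bias, and then verify that the $\epsilon$-regularization, the discrepancy $\tilde{\kop}\to\kop$ permitted by Assumption~\ref{ass:1}, and the finite-$\Delta t$ truncation errors from the Taylor expansion all collapse cleanly into a single $\mathcal{O}(\Delta t)$ remainder.
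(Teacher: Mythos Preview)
Your proposal follows the same route as the paper's proof: the algebraic identity
$\mathfrak{I}_{\mu_\star}-\mathfrak{I}_\mu=-\mathfrak{I}_{\mu_\star}\mathfrak{I}_\mu(1/\mathfrak{I}_{\mu_\star}-1/\mathfrak{I}_\mu)$,
followed by linking the $1/\mathfrak{I}$ difference to the mode insertion gradient and then invoking Corollary~\ref{cor:neg_mode_insert}. The only notable divergence is in your Step~2. The paper does not decompose $\Delta J$ into a ``pointwise running-cost jump'' plus a separate ``downstream state-propagation'' piece; it simply asserts the single identification
\[
\frac{\partial J}{\partial \lambda}\,\Delta t \;\approx\; J(\mu_\star)-J(\mu),
\]
evaluates the left side as $-\Vert(\kop_u v(x))^\top\rho\Vert_{\tilde{\mathbf{R}}^{-1}}^2\,\Delta t$ via Corollary~\ref{cor:neg_mode_insert}, evaluates the right side as $\Delta t\bigl(1/\mathfrak{I}_{\mu_\star}-1/\mathfrak{I}_\mu+\ell_\text{task}(z,\mu_\star)-\ell_\text{task}(z,\mu)\bigr)$ by direct integration over $[t_i,t_i+\Delta t]$, and then rearranges. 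Your two-term decomposition risks double-counting (Proposition~\ref{prop:1}'s $\rho^\top(f_2-f_1)$ already is the first-order change in $J$, so adding a separate pointwise-cost term on top of it and then ``equating to a direct evaluation'' is circular unless you carefully specify two distinct horizons), and indeed you flag exactly this bookkeeping as your main obstacle. The paper's simpler identification sidesteps the issue you anticipate, so you can drop the decomposition and proceed directly.
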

        \begin{proof}
            See Appendix~\ref{app:proof_thm1}.
        \end{proof}

        Theorem~\ref{thm:inf} shows that our controller increases the rate of information that a robot would have normally
        acquired if it had only used the control policy $\mu(z)$.
        Weighing the information measure against the task objective allows us to ensure that the relative
        information gain is positive when using the active learning controller.
        That is, the difference between the information from using the policy $\mu(x)$ and the control $\mu_\star(t)$ will be positive.
        Other heuristics can be used such as a decaying weight on the information gain or setting the weight to 0
        at a specific time so that the robot attempts the task.
        We provide a basic overview of the control procedure in Algorithm~\ref{alg:active_learning}.
        Videos of the experiments and example code can be found at \url{https://sites.google.com/view/active-learning-koopman-op}.

        \begin{algorithm}
        \caption{Active Learning Control}\label{alg:active_learning}
            \begin{algorithmic}[1]
                \State \textbf{initialize:} objective $\ell(z,u)$, policy $\mu(z)$, normally distributed random $\kop \sim \mathcal{N}(0, \bold{1})$.
                \State sample state measurement $x(t_i)$
                \State add $x(t_i)$ to dataset $\mathcal{D}$, update $\kop$ and $\mu(z)$
                \State simulate $z(t), \rho(t)$ for $t \in \left[ t_i, t_i +T \right]$ with conditions $z(t_i) = z(x(t_i))$ and $\rho(t_i+T) = \frac{\partial}{\partial z} m(z(t_i+T))$ with $\mu(z)$
                \State compute $\mu_\star (t) = - \tilde{\mathbf{R}}^{-1} \left( \kop_u \frac{\partial v}{\partial u}\right)^\top \rho(t) + \mu(z(t))$
                \State \Return $\mu_\star (t_i)$
                \State update timer $t_i \to t_{i+1}$
            \end{algorithmic}
        \end{algorithm}

        The following sections use our derived controller to enable active-learning of Koopman operator dynamics.

\section{Single Execution Active Learning of Free-Falling Quadcopters} \label{sec:single_execution}

    In this example, we illustrate the capabilities of combining the Koopman operator representation of a dynamical systems and active learning for
    single execution model learning of a free-falling quadcopter for stabilization.
    Additionally, we compare our approach to other common learning strategies such as active learning with Gaussian processes
    ~\cite{yan_incremental_sparse_gp, nguyen_NEURO_incremental_sparse_gp, deisenroth2015gaussian}, online model adaptation through direct attempts at
    the tasks of stabilization (common online reinforcement learning and adaptive control approach
    ~\cite{lai1986extended, sin1982stochastic, ding2016recursive, kober_IJRR_reinforcement_survey, kormushev_robotmotorskills_em_rl}),
    and a two-stage noisy motor input (often referred to as ``motor babble''~\cite{saegusa2009active, nagabandi2017neural, reinhart_AuRo_skill_babble}).

    % \begin{figure}[h]
    %     \centering
    %     \includegraphics[scale=0.7]{figures/quad-config.pdf}
    %     \caption{ Quadcopter configuration. }
    %     \label{fig:quad_config}
    % \end{figure}

    \subsection{Problem Statement}
        The task is as follows: The quadcopter, with dynamics described in Appendix~\ref{app:quad}
        and~\cite{Fan-RSS-16}, must learn a model within the first second of free-falling and then use the model to generate a stabilizing controller,
         preventing itself from falling any further.
        We define success of the quadcopter in the task when $\Vert x - x_d \Vert^2 < 0.01$ where $x_d$ is the desired target state defined by zero
        linear and angular velocity.
        The controllers are designed as linear quadratic regulators using the model that was learned and the LQ objectives provided in
        Section~\ref{sec:enhanced_koopman_control}.
        The parameters used for this example are defined in Appendix~\ref{app:quad} and follows the same parameter choices as in
        Section~\ref{sec:enhanced_koopman_control} for fairness in terms of the learning methods against which we are comparing.

        We compare the information gained (based on the T-optimality condition) and the stabilization error in time against various learning strategies.
        Each learning strategy is tested with the same 20 uniformly sampled initial velocities (and angular velocities) between $-2$ and $2$ radians/meters per second.
        After each trial, the learned dynamics model is reset so that no information from the previous trials are used.

    \subsection{Other Active Learning Strategies}
        We compare our method for active learning against common dynamic model learning strategies.
        Specifically, we compare three model learning approaches against our method, a two-stage noisy control input approach
        ~\cite{saegusa2009active}, a direct stabilization with adaptive model using least squares~\cite{lai1986extended, sin1982stochastic},
        and an active learning strategy using a Gaussian process~\cite{berkenkamp2016safe, schreiter2015safe}.
        Each of these strategies are generating a Koopman operator using the functions of state defined in Appendix~\ref{app:quad} to generate
        a dynamic model of the quadcopter.
        The Gaussian process formulation is the only model where the functions map to the original state-space resulting in a nonlinear dynamics model.

        \paragraph{Least Squares Adaptive Stabilization}
        The first strategy we compare to is to do the task of stabilization at the while updating the model of the dynamics recursively~\cite{lai1986extended, sin1982stochastic}.
        This is often a strategy used in model-based reinforcement learning~\cite{nagabandi2017neural} and adaptive control~\cite{lai1986extended}.
        \paragraph{Two-Stage Motor Babble}
        The second strategy is a two stage approach using noisy motor input (motor babble) for the first second and then pure stabilization~\cite{saegusa2009active}.
        Rather than directly attempting to stabilize the dynamics, the priority is to simply try all possible motor inputs regardless of the model of the dynamics that is being constructed.
        The motor babble strategy allows us to bound the motor excitation which prevents the rotor from destabilizing once the learning stage is complete.
        As with the direct stabilization method, we use a recursive least squares to update the model of the Koopman operator.
        \paragraph{Active Learning with Gaussian Process}
        The last strategy is an active Gaussian process strategy~\cite{berkenkamp2016safe, schreiter2015safe}.
        In this active learning strategy, we build a model of the dynamics of the quadcopter by generating a Gaussian process dynamics model~\cite{berkenkamp2016safe, deisenroth2015gaussian}.
        Using the variance estimate~\cite{schreiter2015safe}, we uniformly sample points around the current state bounded by some $\epsilon$ constant and find the state which maximizes the variance.
        The sampled state with the largest variance is then used to generate a local LQ controller to guide the quadcopter dynamics to that state to collect the data.
        After the first second, the Gaussian process model is used to generate a stabilizing controller by linearizing the model about the final desired stabilization state.
        The kernel function used is computed using the functions of state provided in Appendix~\ref{app:quad} for a fair comparison.

        Note that for the two-stage, least squared adaptive, and our approach, we learn a Koopman operator dynamics model which we use to compute an LQ controller.
        The Gaussian process model is in in the original state-space as described in~\cite{deisenroth2015gaussian}.

        \begin{figure*}[h!]
            \centering
            \begin{subfigure}{0.3\textwidth}
                \centering
                \includegraphics[width=\linewidth]{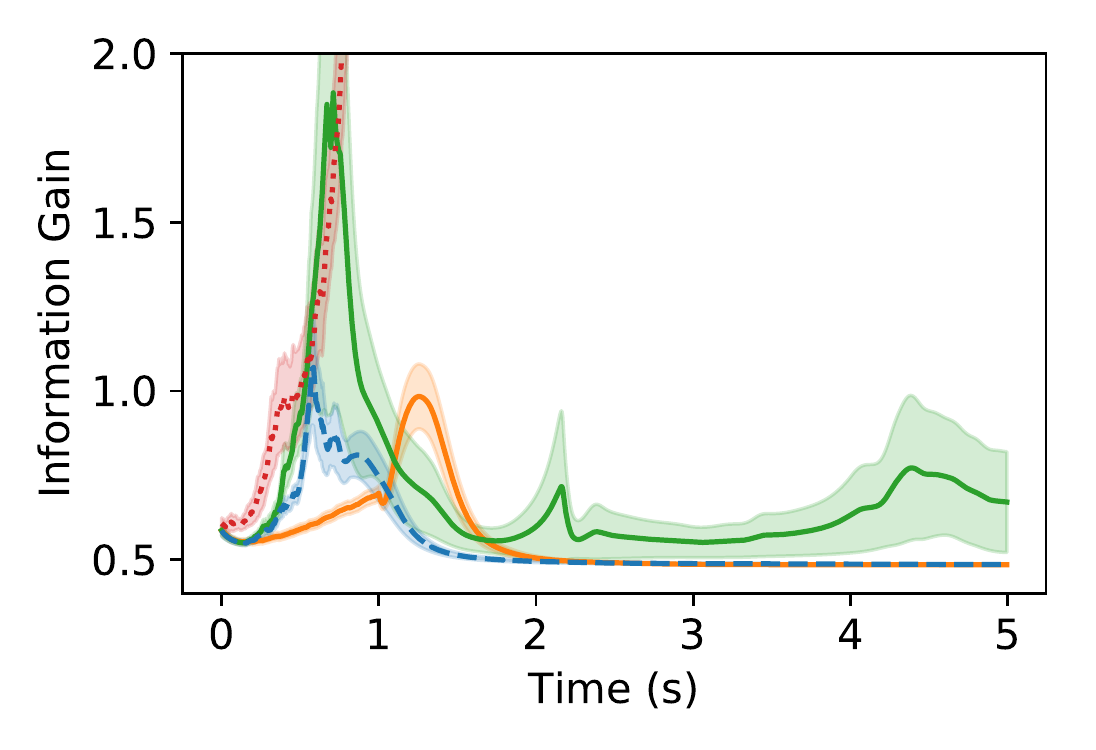}
                \caption{Inf. Gain}
                % \label{fig:deep_2l}
            \end{subfigure}
            \begin{subfigure}{0.3\textwidth}
                \centering
                \includegraphics[width=\linewidth]{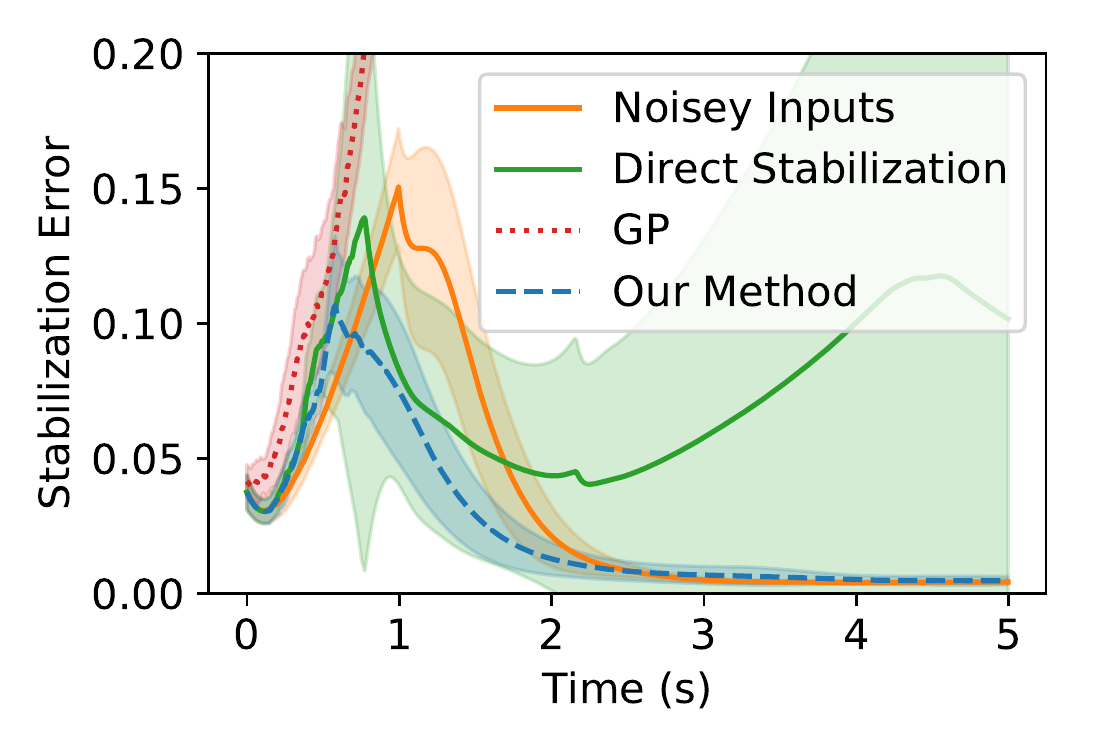}
                \caption{Stabilization Error}
                %\label{fig:deep_cp}
            \end{subfigure}
            \begin{subfigure}{0.3\linewidth}
                \centering
                \includegraphics[width=0.6\linewidth]{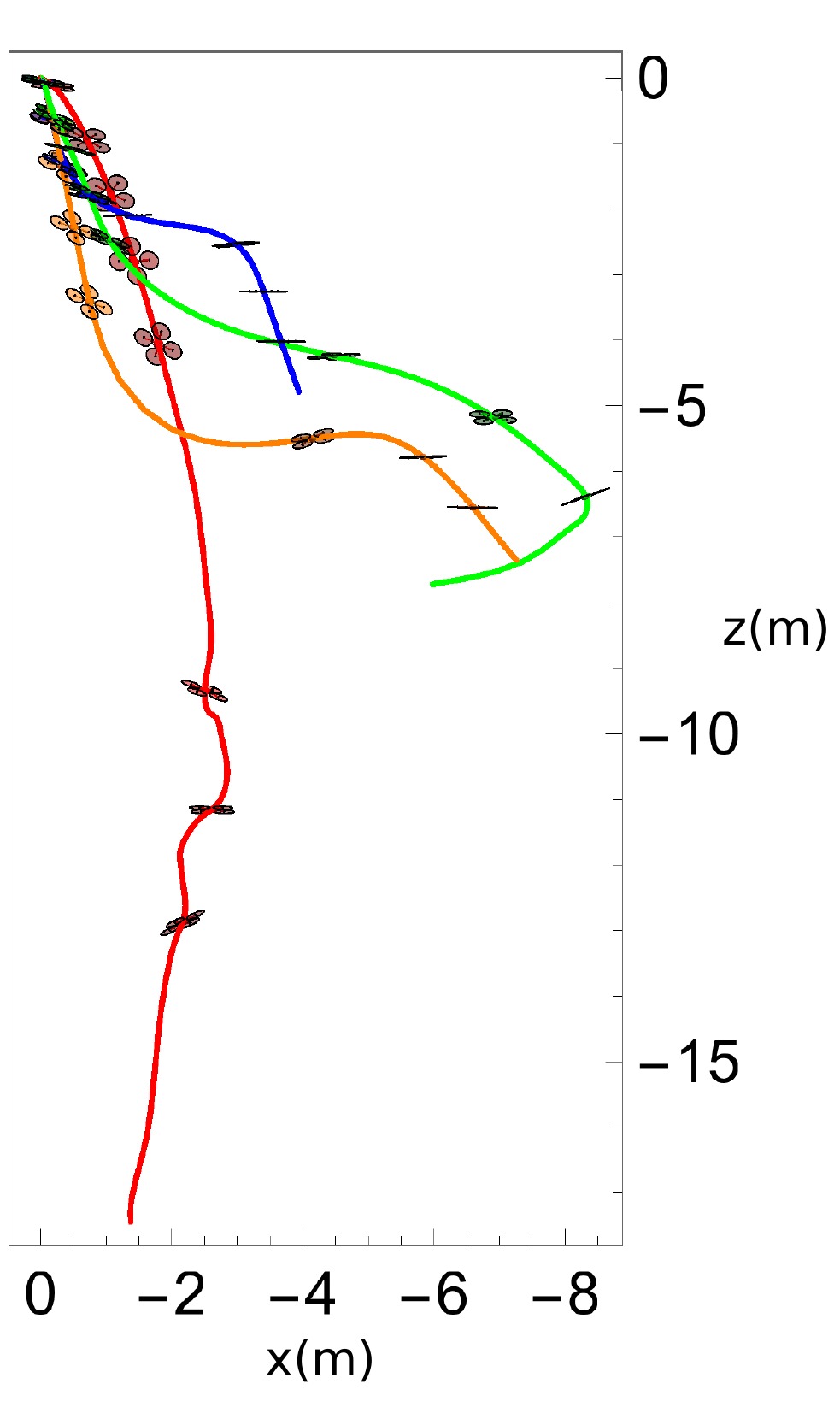}
                \caption{Sample Trajectories}
            \end{subfigure}
            \caption{
                Monte-Carlo simulation comparing various learning strategies to stabilize a quadcopter falling for $20$ trials with uniformly sampled initial linear and angular velocities.
                (a) Information gain (trace of the Fisher information matrix) is shown for the various learning strategies.
                (b) Stabilization error and standard deviation is shown over time for each learning strategy over $20$ trajectories.
                (c) Representative time series snapshots are shown depicting the various learning strategies.
                With our approach, maximization of the information measure, coupled with the Koopman operator formulation of the dynamics, enables quick stabilization of the quadcopter.
            }
            \label{fig:mc_recovery}
        \end{figure*}

    \subsection{Results}

        Figure~\ref{fig:mc_recovery} (a) illustrates the information (T-optimality of the Fisher information matrix) for each method.
        Our approach to active learning is shown to improve upon the information when compared to motor babble
        (the most basic method for active learning).
        The other methods outperform our approach in terms of the overall information gain by overly exciting the dynamics.
        The direct adaptive stabilization method utilizes the incorrect dynamics model to self-adjust and eventually stabilize
        the quadcopter (as shown in the variance).
        The active Gaussian process approach uses the covariance estimate to actuate the quadcopter towards uncertain regions.
        Collecting data in uncertain regions allows the active Gaussian process approach to actively select where the quadcopter
        should collect data next.

        It is worth noting that these approaches will often lead the quadcopter towards unstable regions, making it difficult
        to stabilize the dynamics in time.
        Our approach actively synthesizes when it is best to learn and stabilize which assists in quickly stabilizing the
        quadcopter dynamics (see Figure~\ref{fig:mc_recovery} (b)).
        The addition of the Koopman operator dynamics further enhances the control authority of the quadcopter as shown with the
        direct adaptive stabilization, motor babble, and our approach to active learning.
        While the active Gaussian process model does at times succeed, the method relies on both the quality of data acquired and
        the local linear approximation to the dynamics.
        This results in a deficit of nonlinear information that is needed to successfully achieve the learning task in a
        single execution.

    \subsection{Sensitivity to Initialization and Parameters}

        We further test our algorithm against sensitivities to initialization of the Koopman operator.
        Our algorithm requires an initial guess at the Koopman operator in order to boot-strap the active learning process.
        We accomplish this using the same experiment described in the previous section which used a zero mean, variance of $1$
        normally distributed initialization of the Koopman operator.
        We vary the variance that initializes the Koopman operator parameters using a normal distribution with
        zero mean and a variance experiment set of $\{0.01, 0.1, 1.0, 10.0\}$.

        In Fig.~\ref{fig:koopman-init-variations} we find that so long as the initialization of the Koopman operator is within a
        reasonable initialization (non-zero and within an order of magnitude), the performance is comparable to active learning
        described in Fig.~\ref{fig:mc_recovery}.
        However, this may not be true for all autonomous systems and results may vary depending on the sampling frequency and
        the behavior of the underlying system.
        A benchmark is provided for stabilizing the quadcopter when the Koopman operator is precomputed in
        Fig~\ref{fig:koopman-init-variations} illustrating the performance of the control authority when using the Koopman
        operator-based controller.

        \begin{figure}
            \centering
            \begin{subfigure}{0.23\textwidth}
                \includegraphics[width=\linewidth]{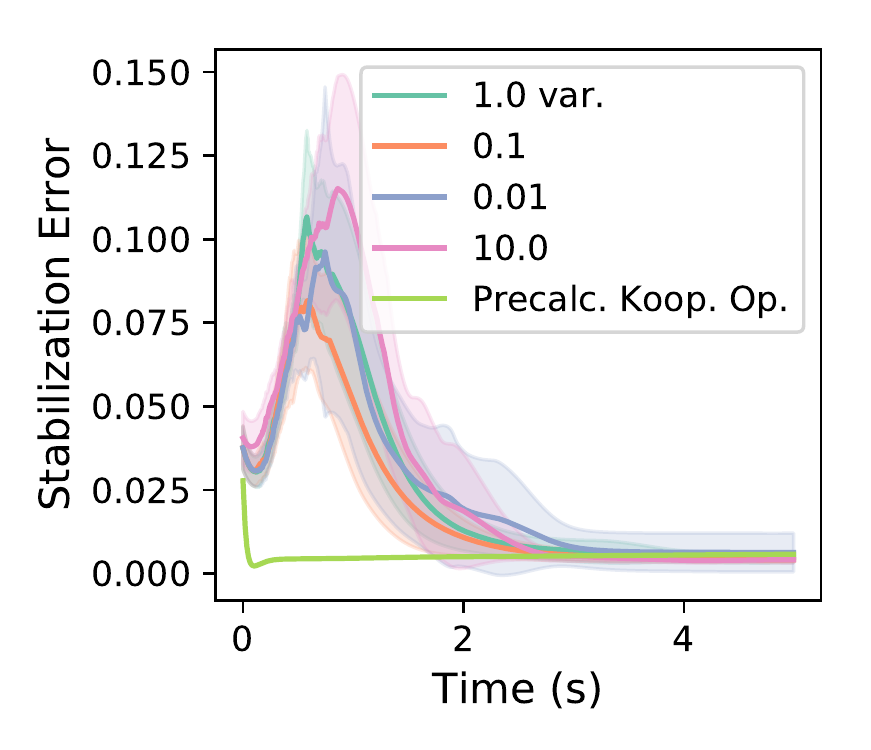}
                \caption{Stab. Err.}
                % \label{fig:deep_2l}
            \end{subfigure}
            \begin{subfigure}{0.23\textwidth}
                \includegraphics[width=\linewidth]{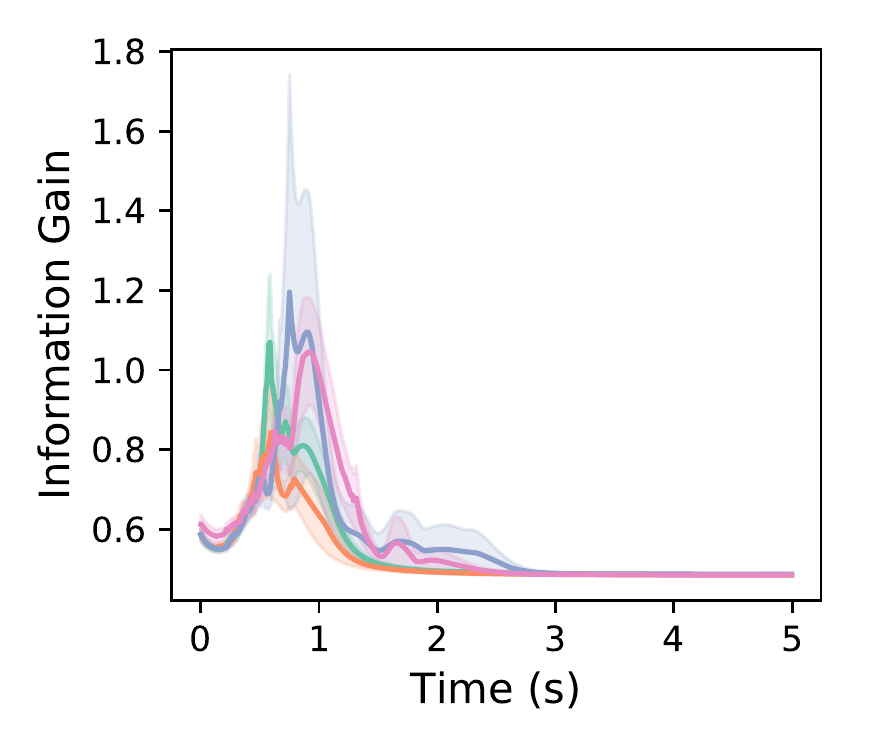}
                \caption{Inf. Gain}
                %\label{fig:deep_cp}
            \end{subfigure}
            \caption{
                    Resulting sensitivities in stabilization error and information gain with respect to variance levels in
                    Koopman operator initialization.
                    Benchmark stabilization performance is provided for known/precalculated Koopman operator.
            }
            \label{fig:koopman-init-variations}
        \end{figure}

        The choices in the parameters of our algorithm can also effect its performance.
        Specifically, setting the value of the regularization term $\tilde{\mathbf{R}}$ too large will prevent the robot from
        significantly exploring the states of the robot.
        In contrast, if the regularization term is set too low, the robot will widen its breath of exploration which can be
        harmful to the robot if the states are not bounded.
        A similar effect is achieved by adding a weight on the active learning objective.

        Changes in the time horizon $T$ will also effect the performance of the algorithm.
        Generally, smaller $T$ will result in more reactive behaviors where larger $T$ tends to have more intent driven
        control responses.
        Choosing these values appropriately will be problem specific; however, the limited number of tunable parameters
        (not including choosing a task objective) provides the advantage of ease of implementation.

    \subsection{Discussion}

        While the single execution capabilities of the Koopman operator with active learning is appealing,
        not all robotic systems will be capable of such drastic performance.
        In particular, this example relies on some prior knowledge of the underlying robotic system and the dynamics
        that govern the system.
        The functions of state are chosen such that they include nonlinear elements (e.g, cross product terms that we
        expect will help in stabilization).
        Thus, the approximate Koopman operator is predicting the evolution of nonlinear elements found in the original
        nonlinear dynamics.
        Often these underlying structures that we can exploit are not known or easily found in robotics.
        Choosing random polynomial or Fourier expansions as function observables can sometimes work
        (see Section~\ref{sec:robot_experiments}),
        but often can lead to unstable eigenvalues in the Koopman operator dynamics which can make model-based control difficult
        to synthesize~\cite{brunton2016koopman}.

        Recent work has attempted to address these issues using sparse optimization~\cite{kramer2017sparse} or discovering invariances in the state-space~\cite{brunton2016koopman}.
        A promising method is automating the discovery of the function observables by learning the functions from data~\cite{yeung2017learning}.
        By using current advances in neural networks and function representation, it is possible to automate the discovery of function observables.
        The following section further develops the work in automating the discovery of function observables for Koopman operators through the use of our approach for active learning.

\section{Automating Discovery of Koopman operator Function Observables} \label{sec:automating_function_discovery}

    As a solution to automating the choice of function observables, the use of deep neural networks~\cite{yeung2017learning} have been used to automatically discover the function observables.
    In this section, we illustrate that we can use these neural networks coupled with our approach for active learning to automatically discover the Koopman operator and the associated functions of state.

    \begin{figure}[h!]
      \centering
      \begin{subfigure}{0.22\textwidth}
        \centering
        \includegraphics[width=\linewidth]{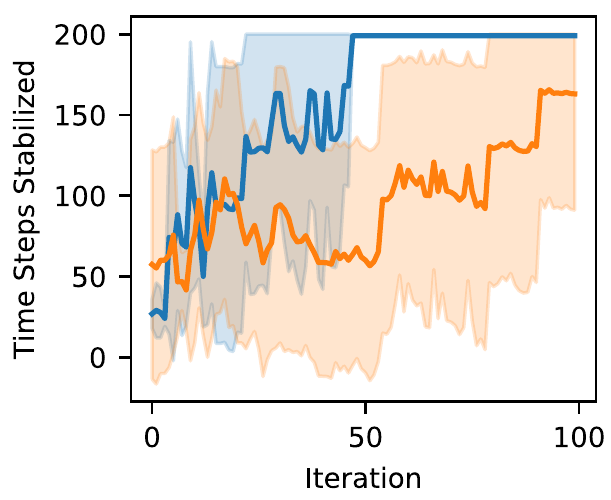}
        \caption{Cart Pendulum}
        \label{fig:deep_cp}
      \end{subfigure}
      \begin{subfigure}{0.22\textwidth}
        \centering
        \includegraphics[width=\linewidth]{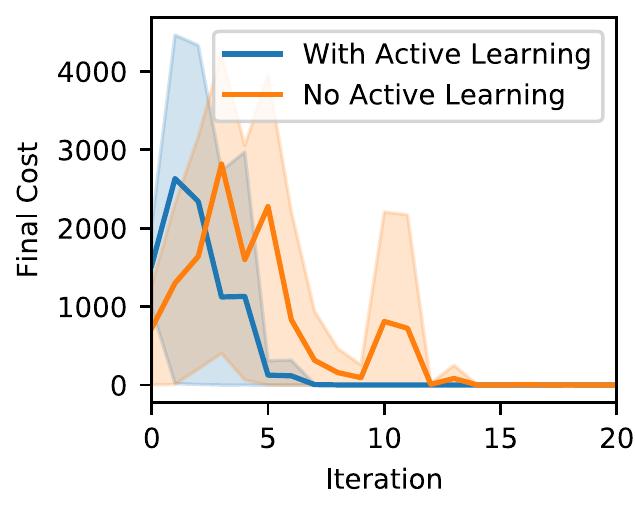}
        \caption{2-Link Robot}
        \label{fig:deep_2l}
      \end{subfigure}
      \caption{(a) Resulting stabilization time of a cart pendulum using Koopman operators with automatic function discovery.
      (b) Control response of a 2-link robot using Koopman operators with automatic function discovery. Active learning improves the rate of success of each task.  }
      \label{fig:deep_nn}
    \end{figure}

    \subsection{Including Automatic Function Discovery}
        Revisiting Equation~\ref{eq:koop_with_control}, we can parameterize $z(x)$ and $v(x,u)$ using a multi-layer neural network with parameters $\theta \in \mathbb{R}^d$.
        We denote the parameterization of $z,v$ as $z_\theta(x)$ and $v_\theta(x,u)$ where the subscript $\theta$ denotes
        the function observables are parameterized by the same set of parameters $\theta$.
        Given the same data set that was defined previously, $\mathcal{D} = \{x(t_m), u(t_m) \}_{m=0}^{M}$, the new optimization problem that is to be solved is
        \begin{equation}\label{eq:nn_regression}
            \min_{\kop, \theta} \frac{1}{2} \sum_{m=0}^{M-1} \Vert \tilde{z}_\theta (x(t_{m+1}),u(t_{m+1}) ) - \kop \tilde{z}_\theta (x(t_{m}),u(t_{m})) \Vert^2,
        \end{equation}
        where $\tilde{z}_\theta (x,u)= \left[ z_\theta(x)^\top, v_\theta(x,u)^\top \right]^\top$.
        Equation (\ref{eq:nn_regression}) can be solved using any of the current techniques for gradient descent (Adams method~\cite{kingma2014adam} is used in this work).
        The continuous time Koopman operator is obtained similarly using the matrix log of $\kop$, resulting in the differential equation
        \begin{equation}\label{eq:nn_koop_dynamics}
            \dot{z}_\theta = \kop_x z_\theta(x(t)) + \kop_u v_\theta(x(t), u(t)).
        \end{equation}
        Because we are now optimizing over $\theta$, we lose the sample efficiency of single execution learning that was illustrated in the example in Section~\ref{sec:single_execution}.
        Active learning can be used; however, adding the additional parameters $\theta$ to the information measure significantly
        increases the computational cost of calculating the Fisher information measure (\ref{eq:fisher_exp}).
        As a result, we only compute the information measure with respect to $\kop$ in order to avoid the computational overhead of maximizing information with respect to $\theta$.

    \subsection{Examples}

        We illustrate the use of deep networks for automating the function observables for the Koopman operator for stabilizing a cart pendulum and controlling a 2-link robot arm to a target.
        A neural network is first initialized (see Appendix~\ref{app:deep_nn} for details) for the Koopman operator functions $z_\theta, v_\theta$ as well as an  LQ controller for the task at hand.
        At each iteration, the robot attempts the task and learns the Koopman operator dynamics by minimizing (\ref{eq:nn_regression}).
        We compare against decaying additive control noise as well as our method for active learning where a weight on information measure is used which decays at each iteration according to  $\gamma^{i+1}$ where $0 < \gamma < 1$ and $i$ is the iteration number.
        The data collected is then used to update the parameters $\theta$ and $\kop$ using (\ref{eq:nn_regression}) and the LQ controller is updated with the new $\kop_x, \kop_u$ parameters.

        Figure~\ref{fig:deep_nn} illustrates that we can automate the process of learning the function observables as well as the Koopman operator.
        With the addition of active learning, the process of learning the Koopman operator and the function observables is improved.
        In particular, stabilization of the cart pendulum is achieved in only $50$ iterations in comparison to additive noise which takes over $100$ iterations.
        Similarly, the 2-link robot can be controlled to the target configuration within $5$ iterations with our active learning approach.

    \subsection{Discussion}
        While this method is promising, there still exist significant issues that merit more investigation in future work.
        One of which is the trivial solution where $z_\theta, v_\theta = 0$.
        This issue often occurs with how the parameters $\theta$ were initialized.
        This trivial solution has been addressed in~\cite{lusch2017deep}; however, their approach requires significantly complicating how the  regression (\ref{eq:nn_regression}) is formulated.
        We found that adding the state $x$ as part of the neural network output of $z_\theta$ was enough to overcome the trivial solution.

\section{Robot Experiments} \label{sec:robot_experiments}

    Our last set of examples test our active learning strategy with robot experiments.
    We use the robots depicted in Figure~\ref{fig:robots} to illustrate control and active learning with Koopman operators.
    The sphero SPRK robot (Figure~\ref{fig:SPRK}) is a differential drive robot inside of a clear outer ball.
    We test trajectory tracking of the SPRK robot in a sand terrain where the challenge is that the SPRK must be able to learn how to maneuver in sand.
    The Sawyer robot (Figure~\ref{fig:sawyer}) is a 7-link robot arm whose task is to track a trajectory defined at the end effector where the challenge is the high dimensionality of the robot.
    We refer the reader to the attached multimedia which has clips of the experiments.

    \begin{figure}[h!]
      \centering
      \begin{subfigure}[b]{0.22\textwidth}
        \centering
        \includegraphics[width=0.9\linewidth]{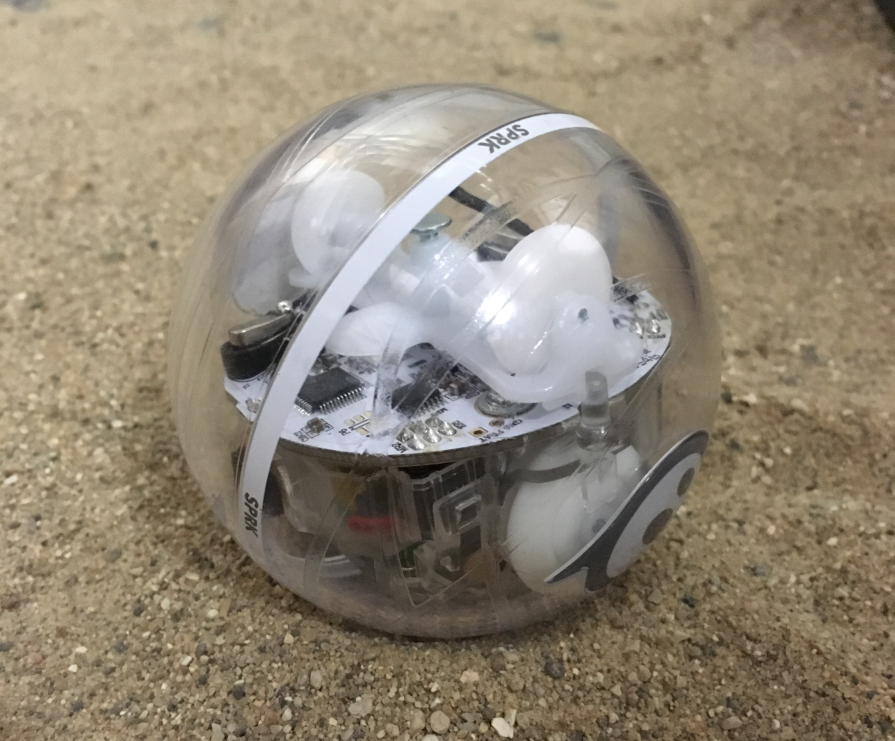}
        \caption{Sphero SPRK}
        \label{fig:SPRK}
      \end{subfigure}
      \begin{subfigure}[b]{0.22\textwidth}
        \centering
        \includegraphics[width=\linewidth]{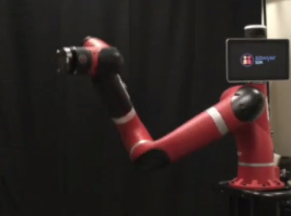}
        \caption{Sawyer Robot}
        \label{fig:sawyer}
      \end{subfigure}
      \caption{Depiction of robots used for experimentation.}
      \label{fig:robots}
    \end{figure}

    \subsection{Experiments: Granular Media and Sphero SPRK}

        Active learning is applied in an experimental setting using the Sphero SPRK robot (Fig.~\ref{fig:SPRK}) in sand.
        The interaction between sand and the SPRK robot makes physics-based models challenging.

        \begin{figure}[h]
          \centering
          \begin{subtable}[b]{0.22\textwidth}
            \centering
            \includegraphics[width=\linewidth]{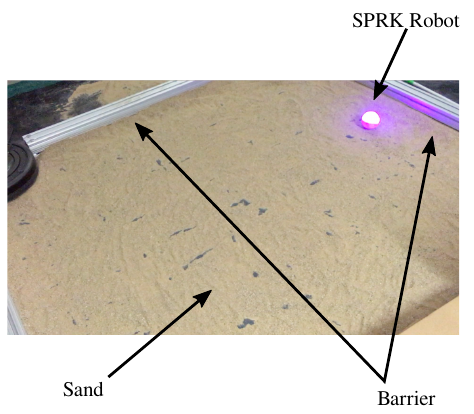}
            \caption{Experimental Setup}
            \label{fig:sand_exp_setup}
          \end{subtable}
          \begin{subtable}[b]{0.22\textwidth}
            \centering
            \includegraphics[width=1.2\linewidth]{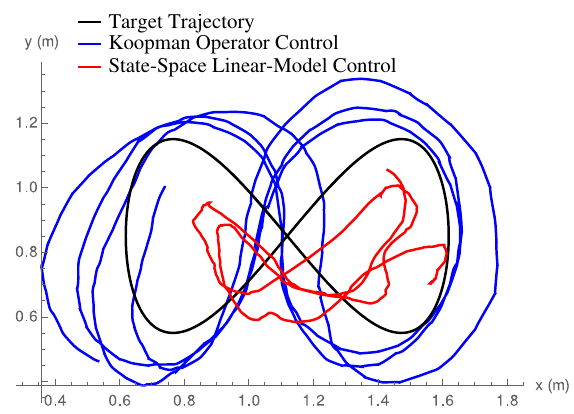}
            \caption{SPRK Trajectories}
            \label{fig:sand_exp_trajectories}
          \end{subtable}
          \\
          \hspace{10mm}
          \begin{subtable}[t]{0.5\textwidth}
          \centering
            \small\addtolength{\tabcolsep}{-3pt}
          \begin{tabular}{|c | c | c | c|}
            \hline
            Method & RMSE & Correlation & Phase Lag (rad)\\
          \hline
            Koopman-based Control & 0.3010 & 0.4028 & 1.1262 \\
            Controller in~\cite{Abraham-RSS-17} & 0.3535 & 0.1034 &  1.4667\\
            \hline
            \end{tabular}
            \caption{Controller performance}
            \label{fig:sand_exp_results}
          \end{subtable}
          \caption{
                Experiment using the Sphero SPRK robot in sand. (a) The experimental setup is depicted with the SPRK robot inside the sand pit.
                Position information is calculated with an overhanging Xbox Kinect using OpenCV~\cite{itseez2015opencv} for tracking.
                (b) Performance of the SPRK robot using the Koopman operator-based controller after active learning.
                Performance is compared with results from~\cite{Abraham-RSS-17}.
                (c) Performance measures showing active learning significantly outperforms non-active learning in robot experiment.
                 The attached multimedia shows the experiment executed.}
          \label{fig:sand_exp}
        \end{figure}

        The parameters for the experiment are defined in Appendix~\ref{app:SPRK}.
        The experiment starts with $20$ seconds of active learning.
        After actively identifying the Koopman operator, the weight on information maximizing is set to zero at $t=20$ and the
        objective is switched to track the trajectory shown in Fig.~\ref{fig:sand_exp_trajectories}.
        In Fig.~\ref{fig:sand_exp_results}, we show the average root mean squared error (RMSE) of the $x-y$ trajectory tracking,
        the average $x-y$ Pearson's correlation using a two-sided hypothesis testing (values close to $1$ indicate
        responsive controllers), and the phase lag of the experimental results.
        Note that in contrast to previous work by the authors~\cite{Abraham-RSS-17}, the method of actively learning the Koopman
        operator improves the performance of the model-based controller.
        In particular, we find that the overall responsiveness and phase lag of the Koopman-based controller improved
        after active learning in sand.

        \begin{figure}[h]
          \centering
          \begin{subtable}[b]{0.22\textwidth}
            \centering
            \includegraphics[width=\linewidth]{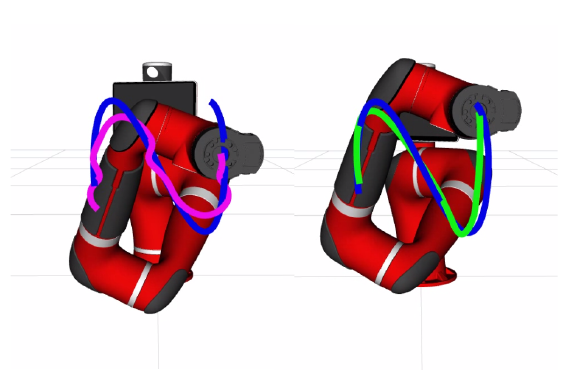}
            \caption{Experimental Visualization}
            \label{fig:sawyer_exp_setup}
          \end{subtable}
          \begin{subtable}[b]{0.22\textwidth}
            \centering
            \includegraphics[width=1.\linewidth]{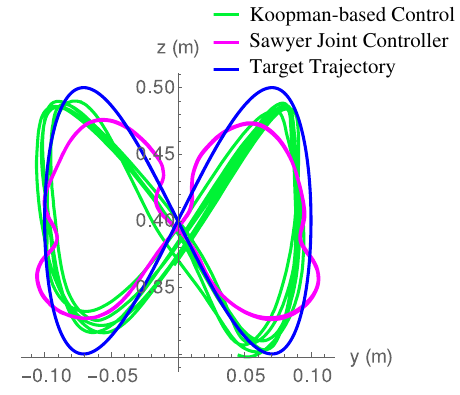}
            \caption{Sawyer Trajectories}
            \label{fig:sawyer_exp_trajectories}
          \end{subtable}
          \\
          \hspace{10mm}
          \begin{subtable}[t]{0.5\textwidth}
          \small\addtolength{\tabcolsep}{-2pt}
          \begin{tabular}{|c | c | c | c|}
          	\hline
             Method & RMSE & Correlation & Phase Lag (rad) \\
        	\hline
            Koopman-based Control & 0.0228 & 0.9777 & 0.2826 \\
            Sawyer Joint Controller & 0.0443 & 0.6026 & 0.7041\\
           	\hline
            \end{tabular}
            \caption{Controller performance}
            \label{fig:sawyer_exp_results}
          \end{subtable}
          \caption{
                    Experiment using Sawyer.
                    Experimental data visualized using RViz~\cite{quigley2009ros}.
                    (a) End-effector trajectory paths using
                    the embedded Rethink Joint controller and Koopman operator controller.
                    Both controllers are running at 100 Hz.
                    (b) Trajectory overlaid from both controller responses.
                    (c) Controller performance shows that active learning for Koopman operator-based controllers performs comparably.
                    We refer the reader to the attached multimedia to view clips of this experiment.}
          \label{fig:sawyer_exp}
        \end{figure}

    \subsection{Experiments: Trajectory Tracking of Rethink Sawyer Robot}
        In this experiment, we use active learning with the Koopman operator to model a $7$ DoF Sawyer robot arm from Rethink Robotics.
        The 7-DoF system is of interest because it is both high dimensional and inertial effects tend to dominate the dynamics
        of the system.
        We define the parameters used for this experiment in Appendix~\ref{app:sawyer}.

        Figure~\ref{fig:sawyer_exp} illustrates a comparison of the embedded controller in the Sawyer robot and the data-driven
        Koopman operator controller.
        Here, we show the average root mean squared error of the tracking position, the Pearson's correlation using a two-sided
        hypothesis testing (values close to $1$ indicate responsive controllers), and the phase lag of the trajectory tracking.
        The resulting controller using the Koopman operator is shown to be comparable to the built-in controller with the
        inclusion of the evolution of the nonlinearities on the Sawyer robot which improve overall trajectory tracking performance.
        The trajectories of the two methods are overlaid which illustrates the improvement in control from the Koopman operator
        after active learning has occurred.
        Since data is always being acquired online, the Koopman operator is continuously being updated as the robot is tracking
        the trajectory.
        The Koopman operator-based controller is able to capture dynamic effects of the individual joints from data.
        This is further reinforced by the improved results found
        Note that one can build a model to solve for similar, if not better, inverse dynamics of the Sawyer robot that can be
        computed for control.
        In particular, the Sawyer robot provides an implementation of inverse dynamics in the robot's embedded controller.
        However, our approach provides high accuracy without needing such a model ahead of time and without linearizing
        the nonlinear dynamics.

%===============================================================================
\section{Conclusion} \label{sec:conclusion}

    In this paper, we use Koopman operators as a method for enhancing control of robotic systems.
    In addition, we contribute a method for active learning of Koopman operator dynamics for robotic systems.
    The active learning controller enables the robots to learn their own dynamics quickly while taking into account the linear structure of the Koopman operator to enhance LQ control.
    We illustrate various examples of robot control with Koopman operators and provide examples for automating design choices for Koopman operators.
    Last, we show that our method is applicable to actual robotic systems.

%===============================================================================

%\clearpage
% The acknowledgments are automatically included only in the final version of the paper.
\section*{Acknowledgment}
The authors would like to thank Giorgos Mamakoukas for his insight and thorough review of this paper.

This material is based upon work supported by the National Science Foundation under awards
NSF CPS 1837515. Any opinions, findings, and conclusions
or recommendations expressed in this material are those of the author(s) and
do not necessarily reflect the views of the National Science Foundation

\appendices
\section{Parameters for Various Examples }

    \subsection{Control of forced van der pol oscillator}
    \label{app:vdp_dynamics}
        The nonlinear dynamics that govern the Van der Pol oscillator are given by the differential equations
        \begin{equation*}
        \frac{d}{dt}\begin{bmatrix}
        x_1 \\
        x_2
        \end{bmatrix} =
        \begin{bmatrix}
        x_2 \\
        -x_1 + \epsilon (1 - x_1^2) x_2 + u
        \end{bmatrix}
        \end{equation*}
        where $\epsilon = 1$ and $u$ is the control input.

        The Koopman operator functions used are defined as
        \begin{equation*}
        z(x) = \left[x_1, x_2, x_1^2, x_2 x_1^2 \right]^\top
        \end{equation*}
        and $v(u) = u$.
        The same functions are used to compute a regression problem where the final equation is given by
        \begin{equation*}
        \frac{d}{dt}
        \begin{bmatrix}
            x_1 \\
            x_2
        \end{bmatrix} =
        \mathbf{A} z(x) + \mathbf{B} v(u)
        \end{equation*}
        where $\mathbf{A} \in \mathbb{R}^{n \times c_x}$ and $\mathbf{B} \in \mathbb{R}^{n \times c_u}$ are both generated using linear regression.

        The weight parameters for LQ control are
        \begin{equation*}
        \mathbf{Q} = \text{diag} \left(\left[1, 1\right]\right) \text{ and } \mathbf{R} = 0.1
        \end{equation*}
        where
        \begin{equation}\label{eq:expanded_weights}
            \tilde{\mathbf{Q}} = \begin{bmatrix}
            \mathbf{Q} & \bold{0} \\
            \bold{0} & 0
            \end{bmatrix} \in \mathbb{R}^{c_x \times c_x}
        \end{equation}

    \subsection{Quadcopter Free-Falling}
    \label{app:quad}
        The quadcopter system dynamics are defined as
        \begin{eqnarray*}
        \dot{h} &=& h \begin{bmatrix}
        \hat{\omega} & v \\
        \bold{0} & 0
        \end{bmatrix}, \\
        J \dot{\omega} &=& M + J \omega \times \omega, \\
        \dot{v} &=& \frac{1}{m}F e_3 - \omega \times v - g R^T e_3,
        \end{eqnarray*}
        where $h = (R, p) \in \text{SE}(3)$, the inputs to the system are $u =\left[ u_1,u_2,u_3,u_4 \right]$, and
        \begin{align*}
        F = k_t (u_1 + u_2 + u_3 + u_4), \\
        M = \begin{bmatrix}
        k_t l (u_2 - u_4) \\
        k_t l (u_3 - u_1) \\
        k_m (u_1 - u_2 + u_3 - u_4)
        \end{bmatrix}
        \end{align*}
        (see \cite{Fan-RSS-16} for more details on the dynamics and parameters used).
        Note that in this formulation of the quadcopter, the control vector $u$ has bidirectional thrust.

        The measurements of the state of the quadcopter are given by
        \begin{equation}
        \left[ a_g, \omega, v\right]^\top \in \mathbb{R}^9
        \end{equation}
        where $a_g\in\mathbb{R}^3$ denotes the body-centered gravity vector and $\omega, v$ are the body angular and linear velocities respectively.
        The sampling rate for this system is $200$ Hz.

        We define the basis functions for this system as
        \begin{equation*}
        z(x) = \left[ a_g, \omega, v, g(v, \omega) \right]^T \in \mathbb{R}^{18}
        \end{equation*}
        where
        $g(v, \omega) = [ v_3\omega_3, v_2\omega_3, v_3 \omega_1, v_1 \omega_3, v_2 \omega_1,$ $v_1 \omega_2, \omega_2 \omega_3, \omega_1 \omega_3, \omega_1 \omega_2 ]
        $ are the chosen basis functions such that $\omega_i, v_i$ are elements of the body-centered angular and linear velocity $\omega,v$ respectively.
        The functions for control are
        \begin{equation*}
        v(u) = u \in \mathbb{R}^4.
        \end{equation*}

        The LQ control parameters for the stabilization problem are given as
        \[
        \mathbf{Q} = \text{diag}\left( \left[
        1,1,1,1,1,1,5,5,5
        \right] \right) \text{ and }
        \mathbf{R} = \text{diag} \left( \left[
        1, 1, 1, 1
        \right] \right)
        \]
        where the weight on the additional functions $\tilde{\mathbf{Q}}$ are set to zero as in  (\ref{eq:expanded_weights}) .
        The time horizon used in $0.1 s$.

        The active learning controller uses a weight on the information measure of $0.1$ and a regularization weight $\tilde{\mathbf{R}} = \text{diag}( 1000, 1000, 1000, 1000])$.
        Motor noise used in the two-stage method is given by uniform noise at $33\%$ of the control saturation.

    \subsection{Neural Network Automatic Function Discovery Configuration}
    \label{app:deep_nn}

        In this example, we use the Roboschool environments~\cite{klimov2017roboschool} for the robot simulations.

        For the cart pendulum example, we use a three layer network with a single hidden layer for $z_\theta$ and $v_\theta$ with $\{4, 20, 40 \}$ and $\{2, 20, 10 \}$ nodes respectively for each layer making $c_x = 40$ and $c_u = 10$.
        The exploration noise used on the control is given by additive zero mean noise with a variance of $40 \%$ motor saturation decreasing at a rate of $0.9^{i+1}$.
        The decay weight on the information measure is given by $0.2^{i+1}$.
        The LQ weights are given by $\tilde{\mathbf{Q}} = \text{diag}([50.0, 1.0, 10.0, 0.1] + \vec{\bold{0}})$ where the first non-zero weights correspond to the states of the cart pendulum.
        A time horizon of $0.1 s$ is used with a sampling rate of $50$ Hz.
        The regularization weight $\tilde{\mathbf{R}} = 1\times 10^6$.

        For the 2-link robot example, we use a similar three layer network with a single hidden layer for $z_\theta$ and $v_\theta$ with $\{4, 20, 40 \}$ and $\{2, 20, 20 \}$ nodes respectively for each layer making $c_x = 40$ and $c_u = 10$.
        The exploration noise used on the control is given by additive zero mean noise with a variance of $40 \%$ motor saturation decreasing at a rate of $0.9^{i+1}$.
        The decay weight on the information measure is given by $0.2^{i+1}$.
        The LQ weights are given by $\tilde{\mathbf{Q}} = \text{diag}([10.0, 1.0, 20.0, 1.0] + \vec{0})$ where the first non-zero weights correspond to the states of the cart pendulum.
        A time horizon of $0.05 s$ is used with a sampling rate of $100$ Hz.
        The regularization weight $\tilde{\mathbf{R}} = \text{diag}( [1\times 10^6, 1\times 10^6])$.

    \subsection{SPRK Tracking in Sand}
    \label{app:SPRK}
        The SPRK robot is running a $30$ Hz sampling rate for control and state estimation.
        Control vectors are filtered using a low-pass filter to avoid noisy responses in the robot.
        The controller weights are defined as
        \[
        \tilde{\mathbf{Q}} = \text{diag}([60,60,5,5,\vec{\bold{1}} ]) \\
        \text{and } \mathbf{R} = \text{diag}([0.1, 0.1]).
        \]
        The control regularization is $\tilde{\mathbf{R}} = \mathbf{R}$.
        A weight of $80$ is added to the information measure.
        A time horizon of $0.5s$ is used to compute the controller.

        We run the active learning controller for $20$ seconds and then set the weight of the information measure to zero and track the end effector trajectory given by
        \[
        \begin{bmatrix}
        x(t)\\ y(t)
        \end{bmatrix} =
        \begin{bmatrix}
        0.5 \cos \left( t \right) + 1.12 \\
        0.3 \sin \left( 2 t\right) + 0.85
        \end{bmatrix}.
        \]

        In this example, the set of functions are chosen as a polynomial expansion of the velocity states $\bold{x} = \left[\dot{x}, \dot{y}\right]$ to the $3^{rd}$ order.
        The function observables are defined as
        \begin{equation*}
        z(x) = \left[ x, y, \dot{x}, \dot{y}, 1, \dot{x}^2, \dot{y}^2, \dot{x}^2\dot{y}, \ldots, \dot{x}^3\dot{y}^3 \right]^T \in \mathbb{R}^{18}
        \end{equation*}
        and
        \begin{equation*}
        v(x, u) = u \in \mathbb{R}^2.
        \end{equation*}

    \subsection{Sawyer Control}
    \label{app:sawyer}
        The Sawyer robot was run on a sampling rate of $100$ Hz.
        Control vectors are filtered using a low-pass filter to avoid noisy responses in the robot.
        The controller weights are defined as
        \[
        \tilde{\mathbf{Q}} = \text{diag}([200 \times \vec{\bold{1}}\in\mathbb{R}^{14}, \vec{\bold{1}} ]) \\
        \text{and } \mathbf{R} = \text{diag}([ 0.001 \times \vec{\bold{1}}\in\mathbb{R}^7]).
        \]
        The control regularization is $\tilde{\mathbf{R}} = \mathbf{R}$.
        A weight of $2000$ is added to the information measure.
        A time horizon of $0.5s$ is used to compute the controller.

        We run the active learning controller for $20$ seconds and then set the weight of the information measure to zero and track the end effector trajectory given by
        \[
        \begin{bmatrix}
        x(t)\\ y(t) \\z(t)
        \end{bmatrix} =
        \begin{bmatrix}
        0.8 \\
        0.1 \cos \left( 2 t \right) \\
        0.1 \sin \left( 4 t\right) + 0.4
        \end{bmatrix}.
        \]

        The functions of state using to compute the Koopman operator are defined as
        \begin{equation*}
        z(x) = \left[ \bold{x}^T, 1, \theta_1 \theta_2, \theta_2 \theta_3, \ldots, \theta_6^3 \theta_7^3,\dot{\theta}_1 \dot{\theta}_2, \ldots, \dot{\theta}_6^3 \dot{\theta}_7^3 \right]^T \in \mathbb{R}^{51}
        \end{equation*}
        with $v(u) = u \in \mathbb{R}^{7}$ as the torque input control of each individual joint and states $\bold{x}$ containing the joint angles and joint velocities.

\section{Proofs}

    \subsection{Proof of Proposition~\ref{prop:1}} \label{app:proof_prop1}
        \noindent \textbf{Proposition} ~\ref{thm:inf} :
        The sensitivity of switching from $\mu$ to $\mu_\star$ at any time $\tau \in \left[t_i, t_i+T \right]$
        for an infinitesimally small $\lambda$,
        (also known as the mode insertion gradient~\cite{egerstedt2003optimal, axelsson2008gradient}) is given by
        \begin{equation*}\label{eq:mode_insertion_gradient}
            \frac{\partial J}{\partial \lambda} \Big\vert_{\tau, \lambda=0} = \rho(\tau)^\top (f_2 - f_1)
        \end{equation*}
            where $z(t)$ is a solution to \ref{eq:koop_with_control} with $u(t) = \mu(z(t))$ and $z(t_i) = z(x(t_i))$,
            $f_2 = f(z(\tau), \mu_\star(\tau))$,
            $f_1 = f(z(\tau), \mu(z(\tau)))$, and
        \begin{equation*}
        \dot{\rho} =
        -\left(
            \frac{\partial \ell}{\partial z} + \frac{\partial \mu}{\partial z}^\top \frac{\partial \ell}{\partial u}
        \right)
        - \left(
            \frac{\partial f}{\partial z} + \frac{\partial f}{\partial u} \frac{\partial \mu}{\partial z}
        \right) ^\top \rho
        \end{equation*}
        subject to the terminal condition $\rho(t_i+T) = \frac{\partial}{\partial z} m(z(t_i+T))$.

        \begin{proof}
            Consider the objective (\ref{eq:gen_obj}) evaluated at a trajectory $z(t) \forall t \in \left[t_i, t_i+T \right]$
            generated from a dynamical system.
            Furthermore, assume that $z(t_i+T)$ is generated by a policy $\mu(z(t)) \forall t \notin \left[ \tau, \tau+\lambda \right]$
            and a controller $\mu_\star(t) \forall t \in \left[ \tau, \tau+\lambda \right]$ where $\tau$ is the time of application
            of control $\mu_\star$ and $\lambda$ is the duration of the control.
            Formally, $z(t_i + T)$ can be written as
            \begin{align}\label{eq:switched_koop}
                z(t_i + T) = z(t_i) &+ \int_{t_i}^{\tau} f(z(t), \mu(z(t))) dt \\
                                    &+ \int_{\tau}^{\tau + \lambda} f(z(t), \mu_\star(t) ) dt \nonumber\\
                                    &+ \int_{\tau + \lambda}^{t_i + T} f(z(t), \mu(z(t)) ) dt \nonumber,
            \end{align}
            where $f(z, u) : \mathbb{R}^{c_x} \times \mathbb{R}^{c_u} \to \mathbb{R}^{c_x}$ is a mapping which describes the time
            evolution of the state $z(t)$.

            Using (\ref{eq:switched_koop}) and (\ref{eq:gen_obj}), we compute the derivative of (\ref{eq:gen_obj}) with respect
            to the duration $\lambda$ of control $\mu_\star$ applied at any time $\tau \in \left[t_i, t_i+T\right]$:
            \begin{align}\label{eq:pre-lambda}
                \frac{\partial}{\partial \lambda} J \Bigg\vert_{\tau} & = \int_{\tau+\lambda}^{t_i + T}
                \left(
                    \frac{\partial \ell}{\partial z} + \frac{\partial \mu}{\partial z}^\top \frac{\partial \ell}{\partial u}
                \right)^\top \frac{\partial z}{\partial \lambda} dt.
            \end{align}
            where
            \begin{equation}\label{eq:recursive_dyn}
                \frac{\partial z(t)}{\partial \lambda} = f_2 - f_1
                + \int_{\tau + \lambda}^{t} \left( \frac{\partial f}{\partial z}
                + \frac{\partial f}{\partial u} \frac{\partial \mu}{\partial z} \right)^\top \frac{\partial z(s)}{\partial \lambda} ds
            \end{equation}
            such that $f_2 = f(z(\tau), \mu_\star(\tau))$, $f_1 = f(z(\tau), \mu(z(\tau)))$ are boundary terms from
            applying Leibniz's rule.

            Because (\ref{eq:recursive_dyn}) is a linear convolution with initial condition,
            $\frac{\partial z (\tau)} {\partial \lambda} = f_2 - f_1$,
            we are able to rewrite the solution to
            $\frac{\partial z(t)}{\partial \lambda}$ using a state-transition matrix $\Phi(t, \tau)$
            \cite{anderson2007optimal}
            with initial condition $f_2 - f_1$ as
            \begin{equation}
                \frac{\partial z(t)}{\partial \lambda} = \Phi(t, \tau) \left( f_2 - f_1 \right).
            \end{equation}
            Since the term $f_2 - f_1$ is evaluated at time $\tau$, we can write (\ref{eq:pre-lambda}) as
            \begin{align}\label{eq:no-limit-mode}
                \frac{\partial}{\partial \lambda} J \Bigg\vert_{\tau} = \int_{\tau+\lambda}^{t_i + T} \left(
                \frac{\partial \ell}{\partial z} + \frac{\partial \mu}{\partial z}^\top \frac{\partial \ell}{\partial u}
                \right)^\top \Phi(t, \tau) dt \left( f_2 - f_1 \right).
            \end{align}

            Taking the limit of (\ref{eq:no-limit-mode}) as $\lambda \to 0$ gives us the sensitivity of
            (\ref{eq:gen_obj}) with respect to switching at any time $\tau \in \left[t_i, t_i + T\right]$.
            We can further define the adjoint (or co-state) variable
            \[
                \rho(\tau)^\top =   \int_{\tau}^{t_i + T}
                \left(
                \frac{\partial \ell}{\partial x} + \frac{\partial \mu}{\partial x}^\top \frac{\partial \ell}{\partial u}
                \right)^\top \Phi(t, \tau) dt \in \mathbb{R}^{c_x}
            \]
            which allows us to define the mode insertion gradient~\cite{axelsson2008gradient} as
            \[
                \frac{\partial}{\partial \lambda} J \Big |_{t=\tau} = \rho(\tau)^\top \left( f_2 - f_1 \right)
            \]
            where
            \[
                \dot{\rho} =
                -\left(
                    \frac{\partial \ell}{\partial z} + \frac{\partial \mu}{\partial z}^\top \frac{\partial \ell}{\partial u}
                \right)
                - \left(
                \frac{\partial f}{\partial z} + \frac{\partial f}{\partial u} \frac{\partial \mu}{\partial z}
                \right) ^\top \rho
            \]
            subject to the terminal condition $\rho(t_i+T) = \frac{\partial}{\partial z} m(z(t_i+T))$.
        \end{proof}

    \subsection{Proof of Theorem~\ref{thm:inf}}\label{app:proof_thm1}
        \noindent \textbf{Theorem} ~\ref{thm:inf} :
          Given Assumption~\ref{ass:1} and dynamics~(\ref{eq:control_affine_koopman}), then the change in information
          \footnote{With respect to the information acquired from applying only $\mu(z)$.}
          $\Delta \mathbf{I}$ subject to (\ref{eq:sac_control}) is given to first order
          \begin{multline}
              \Delta \mathbf{I} \approx
              \left(
                  \Vert ( \kop_u v(x) \right)^\top \rho \Vert_{\tilde{\mathbf{R}}^{-1}}^2 + \ell_\text{task}(z, \mu_\star)
                  \\ - \ell_\text{task}(z, \mu)
              )  \mathfrak{I}_{\mu_\star}\mathfrak{I}_{\mu} + \mathcal{O}(\Delta t),
          \end{multline}
          where $\mathfrak{I}_{\mu_\star}$, $\mathfrak{I}_{\mu}$ is the T-optimality measure (\ref{eq:t-optimality}) from applying the control
          $\mu_\star$ and $\mu$.
        \begin{proof}
            First define (\ref{eq:gen_obj}) for a controller as
            \begin{equation}
                J(u(t)) = \int_{t_i}^{t_i+\Delta t} \frac{1}{\mathfrak{I}_u} + \ell_\text{task} (z(t), u(t)) dt
            \end{equation}
            where $\Delta t < T$ is a time duration, $z(t)$ is subject to the controller $u(t)$, and $\mathfrak{I}_u$ is the measure
            of information from applying the control $u$.
            If we consider the difference between $J(\mu_\star)$ and $J(\mu)$ where $\mu$ is a controller that minimizes
            $\ell_\text{task}(z,u)$, then
            \begin{align}
                J(\mu_\star) - J(\mu) {}& = \int_{t_i}^{t_i+\Delta t} \frac{1}{\mathfrak{I}_{\mu_\star}} -\frac{1}{\mathfrak{I}_{\mu}}
                + \ell_\text{task}(z, \mu_\star) - \ell_\text{task}(z, \mu) dt \nonumber \\
                \begin{split}
                {}& \approx \Delta t \left(  \frac{1}{\mathfrak{I}_{\mu_\star}} -\frac{1}{\mathfrak{I}_{\mu}} +
                \ell_\text{task}(z, \mu_\star) - \ell_\text{task}(z, \mu) \right) \\ & \hspace{5mm}+ \mathcal{O}(\Delta t).
                \end{split}
            \end{align}
            From Corollary~\ref{cor:neg_mode_insert} and that,
            \begin{equation*}
                \frac{\partial}{\partial \lambda} J \Delta t \approx J(\mu_\star) - J(\mu),
            \end{equation*}
             we can show that
            \begin{align}\label{eq:diff_j_mi}
                \frac{\partial}{\partial \lambda} J \Delta t & \approx J(\mu_\star) - J(\mu) \nonumber \\
                \begin{split}
                {}& \approx \Delta t \left(  \frac{1}{\mathfrak{I}_{\mu_\star}} -\frac{1}{\mathfrak{I}_{\mu}} +
                \ell_\text{task}(z, \mu_\star) - \ell_\text{task}(z, \mu) \right) \\ & \hspace{5mm} + \mathcal{O}(\Delta t).
                \end{split}
            \end{align}
            which we rearrange (\ref{eq:diff_j_mi}) and insert (\ref{eq:djdlam_neg}) to get
            \begin{align}\label{eq:simplified_diff_j}
                \begin{split}
                - \Vert \left(\kop_u v(x) \right)^\top \rho \Vert_{\tilde{\mathbf{R}}^{-1}}^2 & \approx
                 \left(  \frac{1}{\mathfrak{I}_{\mu_\star}} -\frac{1}{\mathfrak{I}_{\mu}} +
                \ell_\text{task}(z, \mu_\star) - \ell_\text{task}(z, \mu) \right) \\ & \hspace{5mm} + \mathcal{O}(\Delta t).
                \end{split} \nonumber \\
                \begin{split}
                & \approx \frac{\mathfrak{I}_\mu - \mathfrak{I}_{\mu_\star} + (\ell_\text{task}(z, \mu_\star) -
                 \ell_\text{task}(z, \mu))  \mathfrak{I}_{\mu_\star}\mathfrak{I}_{\mu}}{ \mathfrak{I}_{\mu_\star}\mathfrak{I}_{\mu}}
                  \\ & \hspace{5mm} + \mathcal{O}(\Delta t) .
              \end{split}
            \end{align}
            Setting $\Delta \mathbf{I} =  \mathfrak{I}_{\mu_\star} - \mathfrak{I}_{\mu}$ in (\ref{eq:simplified_diff_j}) and simplifying gives the relative information gain
            \begin{multline*}
                \Delta \mathbf{I} \approx  ( \Vert \left(\kop_u v(x)\right)^\top \rho \Vert_{\tilde{\mathbf{R}}^{-1}}^2 + \ell_\text{task}(z, \mu_\star) \\ - \ell_\text{task}(z, \mu) )  \mathfrak{I}_{\mu_\star}\mathfrak{I}_{\mu} + \mathcal{O}(\Delta t).
            \end{multline*}
        \end{proof}

\bibliographystyle{IEEEtran/IEEEtran}
\bibliography{bib}% .bib

\balance

\end{document}